\documentclass{article} 
\usepackage[preprint]{neurips_2026}

\usepackage[utf8]{inputenc}
\usepackage[T1]{fontenc}
\usepackage{hyperref}
\usepackage{url}
\usepackage{microtype}
\usepackage{xcolor}

\usepackage{amsmath}
\usepackage{amsthm}
\usepackage{mathtools}
\usepackage{enumitem}
\usepackage{forloop}
\usepackage{xspace}
\usepackage{graphicx}
\usepackage{wrapfig}
\usepackage{amssymb}
\usepackage{thmtools}
\usepackage{thm-restate}
\usepackage{cleveref}
\usepackage{tikz}
\usepackage[most]{tcolorbox}

\usepackage{booktabs}
\usetikzlibrary{positioning,fit,shapes,arrows.meta,calc,backgrounds}

\declaretheorem[name=Proposition]{proposition}
\declaretheorem[name=Corollary,sibling=proposition]{corollary}
\declaretheorem[name=Definition]{definition}
\declaretheorem[name=Remark]{remark}

\newcommand{\defvec}[1]{\expandafter\newcommand\csname v#1\endcsname{{\mathbf{#1}}}}
\newcounter{ct}
\forLoop{1}{26}{ct}{
	\edef\letter{\alph{ct}}
	\expandafter\defvec\letter
}
\forLoop{1}{26}{ct}{
	\edef\letter{\Alph{ct}}
	\expandafter\defvec\letter
}

\DeclareMathOperator{\diag}{diag}
\DeclareMathOperator{\vecOp}{vec}

\DeclareMathOperator{\rank}{rank}

\newcommand{\readout}{\xi}

\newcommand{\field}[1]{\ensuremath{\mathbb{#1}}}
\newcommand{\reals}{\field{R}}

\newcommand{\vphi}{\boldsymbol{\phi}}
\newcommand{\sfy}{\mathsf{y}}

\usepackage{colortbl}

\definecolor{C1}{HTML}{8898aa}   
\definecolor{C2}{HTML}{1a5fa0}   
\definecolor{C3}{HTML}{2CA02C}
\definecolor{C4}{HTML}{D62728}
\definecolor{C5}{HTML}{9467BD}
\colorlet{C1light}{C1!70!white}
\colorlet{C2light}{C2!70!white}
\colorlet{C3light}{C3!70!white}
\colorlet{C4light}{C4!70!white}
\colorlet{C5light}{C5!70!white}
\colorlet{C1lighter}{C1!50!white}
\colorlet{C2lighter}{C2!50!white}
\colorlet{C3lighter}{C3!50!white}
\colorlet{C4lighter}{C4!50!white}
\colorlet{C5lighter}{C5!50!white}
\colorlet{C1vlight}{C1!20!white}
\colorlet{C2vlight}{C2!20!white}
\colorlet{C3vlight}{C3!20!white}
\colorlet{C4vlight}{C4!20!white}
\colorlet{C5vlight}{C5!20!white}

\newcommand{\E}{\mathbb{E}}

\definecolor{mdcolor}{rgb}{1, 0.1, 0.59}

\newtcolorbox{keyresult}{%
  enhanced,
  colback=C1!4!white,
  colframe=C1!45!black,
  boxrule=0.6pt,
  arc=2pt,
  left=8pt, right=8pt, top=4pt, bottom=4pt,
  before skip=8pt, after skip=8pt,
}

\newtcolorbox{takeawaybox}[1][]{%
  enhanced,
  breakable,
  colback=white,
  colframe=C1!50!black,
  colbacktitle=C1!10!white,
  coltitle=black,
  fonttitle=\bfseries,
  title={#1},
  boxrule=0.75pt,
  arc=2pt,
  left=8pt, right=8pt, top=5pt, bottom=5pt,
  before skip=10pt, after skip=10pt,
}

\title{Memory by Design: Probabilistic Sequence Layers}

\author{%
  Matthew Dowling$^{1,2,4}$ \quad
  Hyungju Jeon$^{1}$ \quad
  Cristina Savin$^{2,3}$ \quad
  Il Memming Park$^{1,4}$
  \\[0.5em]
  $^{1}$Champalimaud Research, Champalimaud Foundation, Portugal \\
  $^{2}$Center for Neural Science, New York University, USA \\
  $^{3}$Center for Data Science, New York University, USA \\
  $^{4}$RyvivyR Inc., NY, USA
}

\definecolor{hjcolor}{rgb}{0.25, 0.75, 0.5}         

\definecolor{mpcolor}{rgb}{1, 0.1, 0.59}         

\begin{document}
\maketitle
\begin{abstract}
We introduce the \emph{design-model framework}: a way to derive efficient recurrent sequence maps from explicit assumptions about memory. A design model writes evidence into memory by exact Bayesian filtering; a query-
dependent readout produces a predictive distribution whose mean is the layer output. In our linear-Gaussian instantiation, the \emph{Bayesian Layer} propagates both a mean and a covariance: the covariance tracks uncertainty over stored associations, steering writes toward uncertain directions, attenuating gains as evidence accumulates, and preserving confident memories. The same framework unifies several sub-quadratic recurrences: linear attention, GLA, and Mamba-2/SSD are exact filters under a latent-input design model, whereas DeltaNet and related Delta-rule models are covariance-reset reductions of the Bayesian Layer's design model. Restoring covariance propagation yields closed-form predictions for retrieval dynamics, which we verify empirically, and improves robustness beyond the training regime in controlled collision studies, learned associative recall, and the Zoology MQAR benchmark. Training from scratch on WikiText-103 under matched state budgets lowers perplexity on associative-recall hits. Distilling Bayesian Layers into a pretrained 340M Gated DeltaNet improves RULER long-context retrieval over a matched-compute control, at a 2.5--2.7\% held-out perplexity cost.
\end{abstract}

\section{Introduction}
Sub-quadratic sequence layers are typically specified as update rules: a
state is updated by keys, values, gates, or dynamics, then queried to
produce predictions. This algebraic view has produced a broad family of
efficient recurrent architectures, including linear attention, GLA,
Mamba, DeltaNet, and related variants~\citep{
Katharopoulos2020-cz,yang2024gated,gu2024mamba,schlag2021linear,
sun2023retnet,qin2024hgrn2,liu2025longhorn,yang2025gated,
siems2025deltaproduct,peng2023rwkv,peng2024rwkv,behrouz2026atlas,
de2024griffin,beck2024xlstm}. However, specifying the recurrence
directly leaves implicit the assumptions that govern memory: what the
state represents, when observations are redundant, and how strongly
evidence should modify stored information.

We make these assumptions explicit through a \emph{design model}: a
tractable auxiliary probabilistic model whose Bayesian filter defines the
layer update. It is not a generative model of the observed sequence, but
a design object specifying how evidence accumulates in memory. We use
linear-Gaussian design models whose filtering equations yield efficient
learned updates over keys, values, and dynamics, with a query-dependent
readout mapping belief to the layer output. The resulting
\emph{Bayesian Layer} propagates a mean state, which stores associations,
and a covariance state, which tracks resolved directions. Consequently,
writes target uncertain directions, attenuate as evidence accumulates,
and preserve confident memories.

This perspective also unifies existing architectures. Delta-rule
corrections~\citep{schlag2021linear,yang2025gated,kimi2025linear} arise
from a covariance reset within the Bayesian Layer design model, while
additive recurrences~\citep{Katharopoulos2020-cz,yang2024gated,dao2024transformers}
are exact under a different auxiliary model in the same framework
(\Cref{tab:connections}). These models lack covariance propagation:
carrying uncertainty forward so future writes reflect past evidence. We
evaluate this mechanism in controlled collision dynamics, learned
associative recall under extrapolation
(\S\ref{sec:controlled_collision}), the Zoology MQAR
benchmark~\citep{arora2024zoology}~(\S\ref{sec:zoology_mqar}), and
distillation into a pretrained 340M Gated DeltaNet~(\S\ref{sec:slimpajama}).  Our contributions include:
\begin{enumerate}
  \item We introduce the \textbf{design-model framework} for deriving
        recurrent sequence layers from explicit probabilistic
        assumptions, cleanly separating write dynamics (Bayesian
        filtering) from readout (query-dependent prediction).
  \item We derive the \textbf{Bayesian Layer}, in which propagated
        covariance gives rise to automatic gain decay, directional gating,
        and uncertainty-steered keys, and we show that existing efficient
        recurrences arise as covariance-reset reductions or as exact
        specializations of a different auxiliary model within the same
        framework (\Cref{tab:connections}).
  \item We show that covariance propagation yields quantitative
        predictions for retrieval behavior (including closed-form
        retrieval plateaus) and consistently improves robustness beyond
        the training regime across three empirical settings.
\end{enumerate}

\section{The design-model framework}
\label{sec:framework}

Many sub-quadratic sequence layers are introduced as efficient recurrence equations, which clarify computation but obscure the modeling assumptions governing storage, overwriting, and architectural equivalence. We instead derive recurrent layers from auxiliary probabilistic \emph{design models} that separate three choices usually entangled in the recurrence: the latent memory, the input-parameterized write, and the belief readout that produces the activation. A design model is not a generative model of the input sequence and defines no likelihood for $\vx_{1:T}$. For a fixed input prefix, it defines an auxiliary conditional model whose Bayesian filter~\citep{sarkka2013bayesian} becomes the layer's memory update, with $\vx_t$ entering only as a control variable parameterizing the time-$t$ transition, write likelihood, and readout, rather than being generated by the model. This separation distinguishes recurrences that propagate uncertainty about memory from those that collapse or discard it.
Formally, a recurrent sequence layer is a causal map
\begin{equation}
  \vx_{1:T} \mapsto \vy_{1:T},
  \qquad
  \vy_t = F_t(\vx_{1:t}),
  \label{eq:layer_map}
\end{equation}
which the framework derives from four ingredients: an auxiliary
probabilistic model for writing to an internal memory, an initial
belief over that memory, a readout model for querying it
\citep{schmidhuber1992learning,Graves2014-tq}, and an output rule
that converts the resulting predictive distribution into the
deterministic layer output.  The recurrent state of the layer is
the finite-dimensional parameterization of the filtering belief;
the recurrence is Bayesian in a precise architectural sense, as the
exact predict--update step of the design model
\citep{anderson1979optimal,kailath2000linear}.

\begin{figure*}[t]
  \centering
  \includegraphics[width=\linewidth]{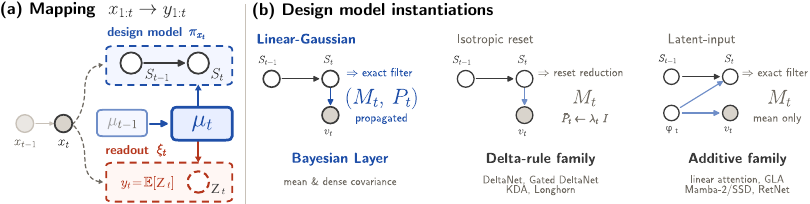}
  \caption{%
    \textbf{Design-model framework.}
    \textbf{(a)} An input selects a design model; its filter updates
    the belief over memory, and the readout produces the layer output.
    \textbf{(b)} Propagating dense covariance gives the Bayesian Layer;
    covariance reset gives \emph{Delta-rule family}; a latent-input model gives
    \emph{additive family}.
  }
  \label{fig:framework}
\end{figure*}

\paragraph{Writing memory.}
Let $\vS_t \in \mathcal S$ denote the internal memory.  A design
model $\mathcal D$ specifies latent dynamics and pseudo-observations
for $\vS_t$.  Given the current input $\vx_t$, the corresponding
Bayesian predict--update step maps the previous belief $\mu_{t-1}$ to
a new belief $\mu_t$, and unrolling this filter over the observed
prefix $\vx_{1:t}$ gives the belief over memory available at time $t$.
When $\mu_t$ belongs to a tractable finite-dimensional family, its
parameters are exactly the recurrent state propagated by the layer.

\paragraph{Reading memory.}
A filtering belief alone does not define a sequence layer, because
it only specifies how information is written into memory.  To
produce outputs, we introduce a readout that, composed with the
filtering belief, generates a \emph{candidate layer output} $\vZ_t$
via
\begin{align}
  \vS_t \sim \mu_t,
  \qquad
  \vZ_t \mid \vS_t, \vx_t \;\sim\; \readout_t(\cdot \mid \vS_t, \vx_t).
\end{align}
$\vZ_t$ is not an observation of the design model and is not used to
update memory; it exists only to define the layer output.  Writing
and reading are thus distinct ingredients of the layer, and
architectures can be compared by asking whether they differ in the
memory maintained, the write rule, the readout, or the output rule.

\paragraph{Producing the layer output.}
The deterministic layer output $\vy_t$ is the mean candidate output
under the current filtering belief,
\begin{equation}
  \vy_t
  \;=\;
  \E_{\vS_t \sim \mu_t,\; \vZ_t \sim \readout_t(\cdot \mid \vS_t, \vx_t)}
  [\vZ_t],
\end{equation}
equivalently the Bayes estimator of $\vZ_t$ under squared loss,
closing the causal map in \eqref{eq:layer_map}.

The framework turns architecture design into probabilistic design.
Different choices of the design model, initial belief, readout, and
output rule yield different recurrent layers
(Table~\ref{tab:connections}).  The linear-Gaussian instantiation in
\S\ref{sec:filter} gives the \emph{Bayesian Layer}, which propagates
both a mean state, storing associations, and a covariance state,
tracking how confidently each memory direction has been resolved.
\S\ref{sec:specializations} shows that existing efficient recurrences
arise as reductions of this construction: the Delta-rule family
collapses the covariance state, while the additive family follows
from a different design model with no covariance state to propagate.

\subsection{The Bayesian Layer design model}
\label{sec:filter}

We now choose the design model that gives the Bayesian Layer.  The
memory is a matrix $\vS_t \in \reals^{D \times m}$: directions in the
$D$-dimensional key space index what can be retrieved, and the $m$
columns store the corresponding values.  The layer maintains a belief
over this memory.  Its mean stores the current associations; its
covariance tracks which key directions have been resolved and therefore
how future inputs should update memory.  To simplify notation throughout
we write $\vS\sim\mathcal{N}(\vS\mid\vM,\vP)$ as shorthand for $\mathcal{N}\!\bigl(\vecOp(\vM),\,\vI_m\otimes\vP\bigr)$.


\paragraph{Design Constraint 1: a tractable belief family.}
The filtering belief over memory is Gaussian,
\begin{equation}
  \mu_t(\vS_t)
  =
  \mathcal{N}(\vS_t\mid\vM_t,\vP_t),
  \label{eq:belief_form}
\end{equation}
with mean state $\vM_t\in\reals^{D\times m}$ and covariance state
$\vP_t\in\reals^{D\times D}$.  Querying the mean memory in direction
$\vq$ returns $\vM_t^\top\vq$.  For the full belief, the scalar
$\vq^\top\vP_t\vq$ is the posterior variance of each coordinate of
$\vS_t^\top\vq$.  Thus $\vP_t$ defines an uncertainty geometry over key
space: large variance marks directions that remain weakly resolved,
small variance marks directions constrained by past writes.


\paragraph{Design Constraint 2: writes are exact Bayesian updates.}
Rather than prescribe the memory update directly, we specify an
auxiliary linear-Gaussian model whose filter preserves
\eqref{eq:belief_form}.  At time $t$, the observed input $\vx_t$
selects a transition law for the memory through a transition matrix
$\vA(\vx_t)$ and process scale $\ell(\vx_t)$.  It also selects an
observation law for an auxiliary $m$-dimensional pseudo-observation
through a write key $\vk(\vx_t)$, a write-noise scale $r(\vx_t)$
and a pseudo-observed value $\vv(\vx_t)$:

\begin{definition}[Bayesian Layer design model]
\label{def:design_model}
\begin{align}
  \pi_{\vx_t}(\vS_t\mid\vS_{t-1})
  &=
  \mathcal{N}\!\bigl(
    \vS_t
    \mid
    \vA(\vx_t)\vS_{t-1},
    \ell^2(\vx_t)\vI
  \bigr),
  \\
  \pi_{\vx_t}(\cdot\mid\vS_t)
  &=
  \mathcal{N}\!\bigl(
    \cdot
    \mid
    \vS_t^\top\vk(\vx_t),
    r^2(\vx_t)\vI
  \bigr).
\end{align}
The realized pseudo-observation is $\vv(\vx_t)$.\end{definition}

We abbreviate
$\vA_t := \vA(\vx_t)$,
$\ell_t := \ell(\vx_t)$,
$\vk_t := \vk(\vx_t)$,
$\vv_t := \vv(\vx_t)$, and
$r_t := r(\vx_t)$.  Conditioned on the input sequence, this model is
linear-Gaussian in $\vS_t$ regardless of how the learned maps
$\vA,\ell,\vk,\vv,r$ depend on $\vx_t$.

Conditioning on the pseudo-observed value $\vv_t$ gives the
predict--update recursion: predicting through the transition gives
$\bar\mu_t = \int \pi_{\vx_t}(\vS_t|\vS_{t-1})\,\mu_{t-1}(\vS_{t-1})\,d\vS_{t-1}$,
and conditioning on $\vv_t$ then yields
$\mu_t \propto \pi_{\vx_t}(\vv_t|\vS_t)\,\bar\mu_t$.
The write rule is Bayesian conditioning on the pseudo-observation:
candidate memories whose projection $\vS_t^\top\vk_t$ agrees with
$\vv_t$ receive higher posterior mass.  The write key $\vk_t$ selects
the direction in memory that is constrained by the current input,
$\vv_t$ specifies the value written along that direction, and $r_t^2$
scales the write: as the observation-noise variance in
\Cref{def:design_model}, larger values mean less trust in $\vv_t$ and
a smaller update. The propagated covariance retains its $\vI_m \otimes \vP_t$ form at every step
(\Cref{app:kronecker_closure}), so the belief is fully described by
$(\vM_t,\vP_t)$.  The closed-form recurrence for these parameters is
given in \S\ref{sec:update}.

\paragraph{Readout.}

To read from memory, the current input produces a read query
$\vq_t := \vq(\vx_t)\in\reals^D$.  We use the Gaussian readout
\begin{equation}
  \readout_t(\vz\mid\vS_t,\vx_t)
  =
  \mathcal{N}\!\bigl(\vz\mid\vS_t^\top\vq_t,\,\vI\bigr),
  \qquad
  \vq_t := \vq(\vx_t) \in \reals^D .
  \label{eq:readout_model}
\end{equation}
The expected candidate output under $\mu_t$ then evaluates to
\begin{equation}
  \vy_t
  =
  \E_{\vS_t\sim\mu_t,\;\vZ_t\sim\readout_t(\cdot\mid\vS_t,\vx_t)}
  [\vZ_t]
  =
  \E_{\mu_t}\!\bigl[\vS_t^\top\vq_t\bigr]
  =
  \vM_t^\top\vq_t .
  \label{eq:readout}
\end{equation}
Writing and reading are therefore decoupled: the recurrence for memory
does not depend on $\vq_t$, and the output dimension $m$ is independent
of the key dimension $D$.

\subsection{The Bayesian Layer recurrence}
\label{sec:update}

Applying the predict--update recursion
above to the matrix-Gaussian belief
\eqref{eq:belief_form} gives a closed-form recurrence for the belief
parameters $(\vM_t,\vP_t)$.  We initialize with
$\vM_0=\mathbf{0}$ and $\vP_0=p_0\vI$.

\begin{keyresult}
\begin{proposition}[Bayesian Layer belief-state recursion]
\label{prop:update}
Under the design model (\Cref{def:design_model}) with belief structure
\eqref{eq:belief_form}, the belief state $(\vM_t,\vP_t)$ satisfies
\begin{align}
  \bar{\vP}_t &= \vA_t\,\vP_{t-1}\,\vA_t^\top + \ell_t^2\vI,
  & &\textit{\small predicted covariance}
  \label{eq:predicted_cov} \\[3pt]
  \vu_t &= \bar{\vP}_t\,\vk_t,
  & &\textit{\small uncertainty-warped key} \\[3pt]
  \beta_t &= \bigl(r_t^2 + \vk_t^\top\vu_t\bigr)^{-1},
  & &\textit{\small innovation precision} \\[6pt]
  \vM_t &= \underbrace{\bigl(\vI - \beta_t\,\vu_t\,\vk_t^\top\bigr)}_{\displaystyle\vF_t\;\text{\small(Bayesian gate)}}\,
            \vA_t\,\vM_{t-1}
          + \beta_t\,\vu_t\,\vv_t^\top,
  & &\textit{\small mean update}
  \label{eq:mean_update} \\[6pt]
  \vP_t &= \bar{\vP}_t - \beta_t\,\vu_t\,\vu_t^\top.
  & &\textit{\small covariance update}
  \label{eq:cov_update}
\end{align}
\end{proposition}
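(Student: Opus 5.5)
The plan is a standard Kalman-filter derivation in vectorized form, using the Kronecker structure to reduce everything to $D\times D$ computations. Since $\vS\sim\mathcal{N}(\vS\mid\vM,\vP)$ means $\vecOp(\vS)\sim\mathcal{N}(\vecOp(\vM),\vI_m\otimes\vP)$, I will work with the columns of $\vS_t$ directly. Under this belief the $m$ columns $\vs^{(j)}_t$ are independent, each distributed as $\mathcal{N}(\vm^{(j)}_t,\vP_t)$. I will then check that both the transition and the pseudo-observation act column by column with shared matrices.

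\textbf{Step 1: the transition.} The law $\mathcal{N}(\vA_t\vS_{t-1},\ell_t^2\vI)$ on $\vecOp(\vS_t)$ has mean $(\vI_m\otimes\vA_t)\vecOp(\vS_{t-1})$ and noise covariance $\ell_t^2\vI_m\otimes\vI_D$. Propagating the Gaussian therefore gives predicted mean $\vA_t\vM_{t-1}$. The predicted covariance is
\[
(\vI_m\otimes\vA_t)(\vI_m\otimes\vP_{t-1})(\vI_m\otimes\vA_t)^\top+\ell_t^2\vI = \vI_m\otimes\bar\vP_t ,
\]
using the mixed-product rule. This is \eqref{eq:predicted_cov}.

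\textbf{Step 2: the observation.} Since $\vS_t^\top\vk_t$ has $j$-th coordinate $\vk_t^\top\vs^{(j)}_t$, the likelihood $\mathcal{N}(\vv_t\mid\vS_t^\top\vk_t,r_t^2\vI)$ factorizes over columns. Each factor is $\mathcal{N}(v_{t,j}\mid\vk_t^\top\vs^{(j)}_t,r_t^2)$. Because the predicted prior is also column-independent with common covariance $\bar\vP_t$, the posterior factorizes into $m$ independent scalar-observation Kalman updates sharing the same $\bar\vP_t$ and $\vk_t$. This is exactly the Kronecker closure of \Cref{app:kronecker_closure}, which I may cite.

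\textbf{Step 3: the scalar update.} For each column, the standard Gaussian conditioning formulas (or completing the square) give innovation variance $r_t^2+\vk_t^\top\bar\vP_t\vk_t=\beta_t^{-1}$ and gain $\beta_t\bar\vP_t\vk_t=\beta_t\vu_t$. The posterior covariance is $\bar\vP_t-\beta_t\vu_t\vu_t^\top$, which is independent of $j$; this yields \eqref{eq:cov_update}. The posterior mean of column $j$ is
\[
\vA_t\vm^{(j)}_{t-1}+\beta_t\vu_t\bigl(v_{t,j}-\vk_t^\top\vA_t\vm^{(j)}_{t-1}\bigr).
\]
Stacking the columns and factoring out $\vA_t\vM_{t-1}$ gives $(\vI-\beta_t\vu_t\vk_t^\top)\vA_t\vM_{t-1}+\beta_t\vu_t\vv_t^\top$, which is \eqref{eq:mean_update}. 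A short induction from $\vM_0=\mathbf 0$, $\vP_0=p_0\vI$ then shows that the belief keeps the form \eqref{eq:belief_form} at every $t$.

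\textbf{Main obstacle.} The only non-routine point is justifying that the Kronecker form is preserved, i.e.\ that the shared key and isotropic noise make the update decouple across columns. Vectorized Kalman formulas would otherwise produce a generic $mD\times mD$ covariance. I will verify this directly with the mixed-product identities. The observation matrix is $\vI_m\otimes\vk_t^\top$, so the innovation covariance is $(\vk_t^\top\bar\vP_t\vk_t+r_t^2)\vI_m$, a scalar multiple of the identity. Its inverse is $\beta_t\vI_m$, and the full-gain formula collapses to $\vI_m\otimes\beta_t\vu_t$.
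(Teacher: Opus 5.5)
Your proposal is correct and follows essentially the same route as the paper: vectorize, use the mixed-product identities to show that prediction and conditioning preserve the $\vI_m\otimes\vP_t$ form (the paper's Kronecker-closure appendix, where the innovation covariance collapses to $\beta_t^{-1}\vI_m$), and read off the mean update as the column-wise Kalman measurement update with rank-one gain $\beta_t\vu_t$ (the paper's error-correction appendix). Your column-independence framing is an equivalent way of seeing the same decoupling, not a different argument.
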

\end{keyresult}

The proposition is exact Kalman algebra applied to the auxiliary design
model, with the Kronecker belief structure in \eqref{eq:belief_form}
preserved at every step~\citep{kalman1960new,anderson1979optimal,kailath2000linear}.
The closure argument is given in \Cref{app:kronecker_closure}, and a
self-contained derivation appears in \Cref{app:error_correction}.  In
particular, the full posterior covariance of $\vecOp(\vS_t)$ factors as
$\vI_m\otimes\vP_t$, so the layer only propagates the $D\times D$
covariance state $\vP_t$.

The quantities appearing in the update are not separately parameterized
architectural modules.  The Bayesian gate $\vF_t$, the warped key
$\vu_t$, and the scalar precision $\beta_t$ are determined by the
filter once the learned design-model functions
$\vA,\ell,\vk,\vv,r$ have been evaluated at $\vx_t$.  The covariance
update keeps $\vP_t$ positive semidefinite: prediction replenishes
uncertainty through the term $\ell_t^2\vI$, and conditioning applies the
rank-one covariance reduction in \eqref{eq:cov_update}.  In practice, we restrict $\vA_t$ to be diagonal (or block-diagonal damped rotations; \Cref{app:chunk_parallel_filtering}).  Then the prediction
step is row-wise scaling, products involving $\vP_t$ cost
$\mathcal{O}(D^2)$, and the mean update costs $\mathcal{O}(Dm)$.  The
per-step recurrent cost is therefore $\mathcal{O}(D^2+Dm)$.

Together with the readout $\vy_t=\vM_t^\top\vq_t$, the recurrence has
the key--value--query form common to efficient recurrent
layers~\citep{schmidhuber1992learning,schlag2021linear}: $\vk_t$ and
$\vv_t$ enter the Bayesian write rule, while $\vq_t$ is used only to
read from the resulting memory.  Here this structure follows from exact
Bayesian filtering under the design model and the Gaussian readout,
rather than being imposed directly as an update rule.  The next section
examines the geometry induced by the covariance state.
\subsection{Anatomy of the update}
\label{sec:anatomy}

\begin{figure*}[t]
  \centering
  \includegraphics[width=\linewidth]{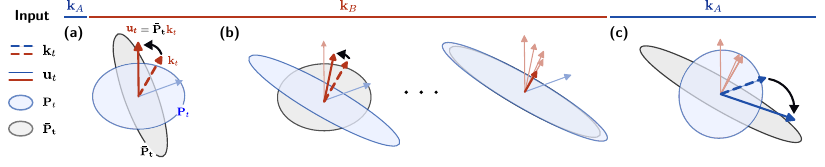}
  \caption{%
    \textbf{Covariance as write geometry.}
    A raw key triggers the write, but covariance decides where it lands:
    resolved components shrink, residual uncertainty receives the update.
    \textbf{(a)} The predicted covariance maps $\vk_t$ to the effective
    write direction $\vu_t = \bar\vP_t\vk_t$.
    \textbf{(b)} Repeated writes contract uncertainty along the written
    address, reducing future gain there.
    \textbf{(c)} After \textcolor[HTML]{B5361C}{$B$} has been resolved,
    a colliding \textcolor[HTML]{1F4FA8}{$A$} write rotates away from the
    shared component toward the unresolved residual.
  }
  \label{fig:hero}
\end{figure*}

The Bayesian Layer uses covariance as write geometry. A write is
triggered by the raw key $\vk_t$, but it is not applied in the raw-key
direction. Instead, the predicted covariance maps the key to an
effective write direction (\Cref{fig:hero}a) and assigns an
uncertainty to each direction:
\begin{equation}
  \vu_t = \bar\vP_t \vk_t \quad\text{(write direction)} ,
  \qquad
  \bar\sigma_t^2(\vk) := \vk^\top \bar\vP_t \vk
  \quad\text{(directional variance)} .
  \label{eq:warped_key_anatomy}
\end{equation}
The directional variance is the predicted variance of the memory
readout $\vS_t^\top\vk$.  We write
$\sigma_t^2(\vk) := \vk^\top \vP_t \vk$ for its posterior counterpart
after the time-$t$ write; throughout, barred variances are taken under
the predicted covariance $\bar\vP_t$ and unbarred ones under the
posterior $\vP_t$.  For unit keys with $\vA_t = \vI$ the two
interleave as $\bar\sigma_{t+1}^2(\vk) = \sigma_t^2(\vk) + \ell^2$, so
they share the same steady-state growth.
Directions with large $\bar\sigma_t^2(\vk)$
remain unresolved and receive larger writes; directions with small
$\bar\sigma_t^2(\vk)$ have already been explained and receive smaller
writes. This distinction is decisive under interference. Suppose a target
value is stored and reinforced by $n_b$ repeated \emph{target writes} at
address $\vk_B$. Then $n_f$ repeated \emph{distractor writes} store a different value
at a correlated address $\vk_A$, where
$\rho=\vk_A^\top\vk_B$ and
\[
  \vk_A
  =
  \rho\,\vk_B
  +
  \sqrt{1-\rho^2}\,\vk_\perp,
  \qquad
  \vk_\perp^\top\vk_B = 0 .
\]
The target writes at $B$ contract uncertainty along $\vk_B$
(\Cref{fig:hero}b),
so $\bar\sigma_t^2(\vk_B)$ becomes small. It does not, however,
resolve the orthogonal residual $\vk_\perp$. Writing
$\bar\sigma_B^2 := \bar\sigma_t^2(\vk_B)$ and
$\bar\sigma_\perp^2 := \bar\sigma_t^2(\vk_\perp)$ for the directional
variances, during the subsequent distractor writes at $A$ the
effective write direction is approximately
\begin{equation}
  \vu_t
  =
  \bar\vP_t\vk_A
  \;\approx\;
  \rho\,\bar\sigma_B^2\,\vk_B
  +
  \sqrt{1-\rho^2}\,\bar\sigma_\perp^2\,\vk_\perp,
  \qquad
  \bar\sigma_B^2 \ll \bar\sigma_\perp^2 .
  \label{eq:residual_write_main}
\end{equation}
The component of $\vk_A$ shared with the target address is
suppressed, while the unresolved residual is emphasized
(\Cref{fig:hero}c). The key that triggers the write is $\vk_A$, but
the direction that receives the write is closer to $\vk_\perp$.

Raw-key updates cannot express this separation. They continue
writing along $\vk_A$ itself, so every distractor write contains a
component in the already-resolved $\vk_B$ direction and can overwrite
the target association. Propagated covariance instead protects the
resolved component while allowing new information to enter through
the residual uncertainty.

This geometry gives the controlled collision experiment its three
signatures, all as $t\to\infty$ limits under sustained writes.
Repeated writes to the same address $\vk_A$ drive its directional
variance $\bar\sigma_t^2(\vk_A)$ to a finite limit
\eqref{eq:flood_gain_floor}. Across nearby addresses, uncertainty
along $\vk_B$ grows at rate $(1-\rho^2)\ell^2$ per step in steady
state --- only the part of $\vk_B$ perpendicular to $\vk_A$ is
unconstrained by the distractor writes, giving the $(1-\rho^2)$ factor:
\begin{equation}
  \bar\sigma_{t+1}^2(\vk_B) - \bar\sigma_t^2(\vk_B)
  \;\longrightarrow\; (1-\rho^2)\,\ell^2 .
  \label{eq:crossover_steady}
\end{equation}
At the interference boundary, the distractor write rotates away from
the target component toward the residual direction --- already on
display in \eqref{eq:residual_write_main}. In
\S\ref{sec:controlled_collision} we isolate these effects by
applying $n_b$ target writes at $B$, applying $n_f$ distractor writes
at $A$, and querying $B$
(\Cref{fig:controlled_collision}).

\section{Specializations of the design model}
\label{sec:specializations}

The construction in \S\ref{sec:framework} identified the Bayesian Layer as
the exact filter derived from the \emph{Bayesian Layer design model}. Resetting
the propagated covariance in this design model gives the \emph{Delta-rule
family}. The \emph{additive family} is not a reduction of the Bayesian
Layer; it is the exact filter induced by a second, \emph{latent-input design
model} (\Cref{tab:connections,app:latent_input,app:error_correction}). The
two design models characterize broad families of recurrent architectures,
but do not cover every architecture that motivates this line of work.

\begin{table}[!t]
\centering
\footnotesize
\caption{%
  \textbf{Efficient recurrent updates by design model.}
  Top: the Bayesian Layer design model of \Cref{def:design_model}, where the Bayesian
  Layer is exact and Delta-rule variants arise from covariance reset
  (\Cref{prop:isotropic_reset}).  Bottom: the latent-input design
  model~\eqref{eq:latent_input_model}, where additive updates are exact.
  Notation:
  $\bar{\vM}_t := \vA_t \vM_{t-1}$,
  $\hat{\vv}_t := \bar{\vM}_t^\top \vk_t$,
  $\vu_t := \bar{\vP}_t \vk_t$,
  $\beta_t := (r_t^2 + \vk_t^\top \vu_t)^{-1}$,
  $\eta_t := \lambda_t / (r_t^2 + \lambda_t\|\vk_t\|^2)$, and
  $\omega_t := \lambda_t / (\lambda_t + r_t^2)$.}
\label{tab:connections}
\renewcommand{\arraystretch}{1.1}
\setlength{\tabcolsep}{1pt}
\begin{tabular*}{\textwidth}{@{\extracolsep{\fill}} l l l @{}}
\toprule
\textbf{Model}
  & \textbf{Derivation}
  & \textbf{Recurrence} \\
\midrule
\multicolumn{3}{@{}l}{%
\textit{Bayesian Layer design model (\Cref{def:design_model}):}\;
$\vS_t \mid \vS_{t-1} \sim \mathcal{N}(\vA_t \vS_{t-1},\, \ell_t^2 \vI),\;
\vv_t \mid \vS_t \sim \mathcal{N}(\vS_t^\top \vk_t,\, r_t^2 \vI)$} \\[2pt]

\textbf{Bayesian Layer}
  & exact filter
  & $\vM_t=\bar{\vM}_t+\beta_t\vu_t(\vv_t-\hat{\vv}_t)^\top$
  \\

DeltaNet~\citep{schlag2021linear,yang2024parallelizing}
  & reset; $\vA_t=\vI$
  & $\vM_t=\bar{\vM}_t+\eta_t\vk_t(\vv_t-\hat{\vv}_t)^\top$
  \\

Gated DeltaNet~\citep{yang2025gated}
  & reset; $\vA_t=\alpha_t \vI$
  & $\vM_t=\bar{\vM}_t+\eta_t\vk_t(\vv_t-\hat{\vv}_t)^\top$
  \\

KDA (Kimi Linear)~\citep{kimi2025linear}
  & reset; $\vA_t=\diag(\boldsymbol{\alpha}_t)$
  & $\vM_t=\bar{\vM}_t+\eta_t\vk_t(\vv_t-\hat{\vv}_t)^\top$
  \\

Longhorn~\citep{liu2025longhorn}
  & per-column reset
  & $\vM_t^{(j)}=\vM_{t-1}^{(j)}+\eta_{t,j}\vk_t(v_{t,j}-\hat v_{t,j})$
  \\

\midrule
\multicolumn{3}{@{}l}{%
\textit{Latent-input design model (\Cref{eq:latent_input_model}):}\;
$\vphi_t \sim \mathcal{N}(\mathbf{0},\, \lambda_t \vI),\;
\vS_t = \vA_t \vS_{t-1} + \vk_t \vphi_t^\top,\;
\vv_t \mid \vphi_t \sim \mathcal{N}(\vphi_t,\, r_t^2 \vI)$} \\[2pt]

Linear Attention~\citep{Katharopoulos2020-cz}
  & exact; $\vA_t=\vI$
  & $\vM_t=\vM_{t-1}+\vk_t(\omega_t\vv_t)^\top$
  \\

RetNet~\citep{sun2023retnet}
  & exact; $\vA_t=\rho_h \vI$
  & $\vM_t=\rho_h\vM_{t-1}+\vk_t(\omega_t\vv_t)^\top$
  \\

GLA (diag.\ form)~\citep{yang2024gated}
  & exact; $\vA_t=\diag(\boldsymbol{\alpha}_t)$
  & $\vM_t=\diag(\boldsymbol{\alpha}_t)\vM_{t-1}+\vk_t(\omega_t\vv_t)^\top$
  \\

Mamba-2 / SSD~\citep{dao2024transformers}
  & equiv.\ form; $\vA_t=a_t\vI$
  & $\vM_t=a_t\vM_{t-1}+\vk_t(\omega_t\vv_t)^\top$
  \\

\bottomrule
\end{tabular*}
\end{table}

\subsection{\texorpdfstring{Bayesian Layer design model: covariance reset $\Rightarrow$ Delta-rule family}{Bayesian Layer design model: covariance reset => Delta-rule family}}
\label{sec:delta_family}

\begin{proposition}[Isotropic-reset reduction]
\label{prop:isotropic_reset}
Fix a scalar sequence $\lambda_t > 0$ and replace the predicted covariance
$\bar{\vP}_t$ in the design-model update by $\lambda_t\vI$ at every step.
Then $\vu_t = \lambda_t\vk_t$ and the Bayesian Layer update reduces to
\begin{equation}
  \vM_t
  =
  {(\vI - \eta_t\vk_t\vk_t^\top)\,\vA_t\,\vM_{t-1}}
  +
  {\eta_t\vk_t\vv_t^\top},
  \qquad
  \eta_t := \lambda_t / (r_t^2 + \lambda_t\|\vk_t\|^2).
  \label{eq:isotropic_reset}
\end{equation}
\end{proposition}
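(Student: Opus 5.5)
The plan is to substitute directly into \Cref{prop:update}. That proposition gives the mean update \eqref{eq:mean_update} in terms of $\vu_t$ and $\beta_t$, and both quantities depend on the predicted covariance only through $\bar\vP_t$. So replacing $\bar\vP_t$ by $\lambda_t\vI$ is a purely algebraic change inside those formulas. The reset acts on the covariance path alone: the mean recursion still takes the previous mean $\vM_{t-1}$ and the transition $\vA_t$ as given. The covariance update \eqref{eq:cov_update} plays no role, because its output $\vP_t$ is discarded at the next step's reset.

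\textbf{Step 1: the warped key.} From $\vu_t=\bar\vP_t\vk_t$ we get immediately $\vu_t=\lambda_t\vk_t$, which is the first claim of the proposition.

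\textbf{Step 2: the innovation precision.} Substituting into $\beta_t=(r_t^2+\vk_t^\top\vu_t)^{-1}$ gives
\[
\beta_t=\bigl(r_t^2+\lambda_t\|\vk_t\|^2\bigr)^{-1}.
\]
This is well defined because $r_t^2>0$ and $\lambda_t>0$. It follows that $\beta_t\vu_t=\eta_t\vk_t$, with $\eta_t=\lambda_t/(r_t^2+\lambda_t\|\vk_t\|^2)$ exactly as in the statement.

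\textbf{Step 3: the mean update.} Inserting $\beta_t\vu_t=\eta_t\vk_t$ into both occurrences in \eqref{eq:mean_update} gives the Bayesian gate $\vF_t=\vI-\eta_t\vk_t\vk_t^\top$ and the write term $\eta_t\vk_t\vv_t^\top$. Together these yield \eqref{eq:isotropic_reset}.

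As a consistency check against \Cref{tab:connections}, expand $\vF_t\vA_t\vM_{t-1}$ using $\bar\vM_t=\vA_t\vM_{t-1}$ and $\hat\vv_t=\bar\vM_t^\top\vk_t$. This gives the error-correction form $\bar\vM_t+\eta_t\vk_t(\vv_t-\hat\vv_t)^\top$.

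There is no genuine obstacle here. The only point needing care is interpretive: we must confirm that the reset means the substitution is made in the $\vu_t$ and $\beta_t$ formulas, with the same $\lambda_t$ in both, while leaving the mean recursion structure intact. Once that is fixed, the identity $\beta_t\vu_t=\eta_t\vk_t$ carries the whole argument.
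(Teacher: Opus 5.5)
Your proposal is correct and follows the same route as the paper, which gives no separate proof but treats the result as direct substitution into \Cref{prop:update}. Setting $\bar{\vP}_t=\lambda_t\vI$ in the formulas for $\vu_t$ and $\beta_t$ gives $\beta_t\vu_t=\eta_t\vk_t$, and inserting this into both occurrences in \eqref{eq:mean_update} yields \eqref{eq:isotropic_reset} immediately.
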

Equation~\eqref{eq:isotropic_reset}
has the canonical Delta-rule form: the gate subtracts what the current
state predicts along $\vk_t$ before the new value is written. The
Delta-rule family can therefore be seen as the Bayesian Layer under
isotropic reset.  Specializing $\vA_t$ along the scalar--diagonal--dense hierarchy recovers
the canonical members of this family.
Each \emph{exactly matches} \eqref{eq:isotropic_reset} once $\bar{\vP}_t$ is
reset to $\lambda_t\vI$ at every step: what these architectures discard
relative to the full Bayesian Layer is the propagated covariance, not
the Delta-rule correction itself. 

\subsection{\texorpdfstring{Latent-input design model: exact filter $\Rightarrow$ additive family}{Latent-input design model: exact filter => additive family}}
\label{sec:additive_family}

The additive family is exact under a \emph{different} design model, where the pseudo-observation $\vv_t$ gives evidence about a transient write coefficient $\vphi_t$, not about the persistent memory projection:
\begin{equation}
  \vphi_t \sim \mathcal{N}(\mathbf{0}, \lambda_t\vI),
  \qquad
  \vS_t = \vA_t\vS_{t-1} + \vk_t\vphi_t^\top,
  \qquad
  \vv_t \mid \vphi_t \sim \mathcal{N}(\vphi_t, r_t^2\vI).
  \label{eq:latent_input_model}
\end{equation}

Because $\vS_t$ is deterministic given $\vphi_t$ and the past, conditioning 
$\vphi_t$ on $\vv_t$ alone gives $\E[\vphi_t \mid \vv_t] = \omega_t\vv_t$ with
$\omega_t = \lambda_t / (\lambda_t + r_t^2)$, and the update for
$\vM_t := \E[\vS_t \mid \vx_{1:t}]$ becomes purely additive:
\begin{equation}
  \vM_t = \vA_t\vM_{t-1} + \omega_t\vk_t\vv_t^\top.
\end{equation}
Derivation in \Cref{app:latent_input}. Specializing $\vA_t$ along the same scalar--diagonal--dense hierarchy
recovers the canonical members of this family, each the exact filter
under~\eqref{eq:latent_input_model}. The design model is degenerate:
$\vS_t$ is a deterministic function of $\vphi_t$ and the past, so the
filter carries no posterior uncertainty over $\vS_t$ --- only over the
per-step latent $\vphi_t$, which is independent across $t$. Additive
models therefore discard both propagated covariance and the rank-one
innovation gate.

\section{Experiments}
\label{sec:experiments}


\begin{figure*}[t]
  \centering
  \includegraphics[width=\textwidth]{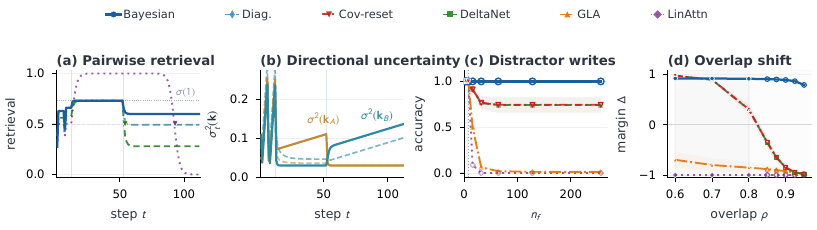}
  \caption{%
  \textbf{Controlled collision recall.}
  Two overlapping addresses $\vk_A,\vk_B$ have overlap $\rho$. The target at $B$ receives $n_b$ repeated target writes; the distractor at $A$ then receives $n_f$ repeated distractor writes; finally, $\vk_B$ is queried.
  \textbf{(a)} Pairwise retrieval at $\vk_B$ over time ($\rho{=}0.92$). The dotted line at $0.731{=}(1{+}e^{-1})^{-1}$ is the target-write asymptote (the pairwise softmax when the target outscores the distractor by~$1$). The \textcolor[HTML]{1f77b4}{Bayesian Layer} holds this plateau during distractor writing; \textcolor[HTML]{9467bd}{Linear Attention} overwrites toward zero; DeltaNet-style and Cov-reset (coincident under unit-norm keys) settle at a lower plateau. Triangles ($\blacktriangledown$) on the chance line mark each baseline's first sub-chance step.
  \textbf{(b)} Posterior directional uncertainties $\sigma_t^2(\vk) = \vk^\top \vP_t \vk$ along $\vk_A$ (\textcolor[HTML]{d97c2b}{orange}) and $\vk_B$ (\textcolor[HTML]{2a8a8a}{teal}). Repeated distractor writes drive $\sigma_t^2(\vk_B)$ to grow linearly at the same $(1{-}\rho^2)\ell^2$ rate as the predicted variance (\Cref{eq:crossover_steady}). Dashed traces (\textbf{--\,--\,--}) mark the diagonal-covariance ablation, which discards off-diagonal entries after each covariance update; solid traces use the dense propagated covariance.
  \textbf{(c, d)} Learned recall on the same geometry: distractor-write count is swept to $n_f{=}256$ at $\rho{=}0.80$ (c, $32{\times}$ the training range $n_f{\le}8$), and overlap is swept to $\rho{=}0.95$ at $n_f{=}64$ (d, beyond training $\rho{\in}[0.60, 0.80]$). Only the Bayesian Layer holds across both axes; DeltaNet-style and Cov-reset plateau in (c) and cross to negative margin in (d).
  }
  \label{fig:controlled_collision}
  \label{fig:collision} 
\end{figure*}

We test the predictions of \S\ref{sec:anatomy} in three settings:
controlled collision recall---deterministic verification and learned
extrapolation under the same collision geometry
(\S\ref{sec:controlled_collision}); the Multi-Query Associative Recall
(MQAR) benchmark~\citep{arora2024zoology} (\S\ref{sec:zoology_mqar});
and distillation into a pretrained 340M Gated DeltaNet
(\S\ref{sec:slimpajama}).
\Cref{tab:consolidated_settings} consolidates the optimizer, tuning,
budget, and repetition settings for these experiments.

\subsection{Controlled collision recall}
\label{sec:controlled_collision}
\label{sec:deterministic_collision} 

The covariance geometry of \S\ref{sec:anatomy} predicts that key
collisions are destructive only when updates ignore directional
uncertainty.  Raw-key updates do exactly this: repeated writes
overwrite nearby associations along the colliding direction, whereas
uncertainty-aware updates preserve them by adapting gain and
direction.  We test this with two addresses \(A\), \(B\) of overlap
\(\bar\vk_A^\top\bar\vk_B=\rho\): after writing distinct values, we
apply $n_b$ target writes to \(B\), apply $n_f$ distractor writes to
\(A\), and query at \(B\). The resulting \emph{interference} is
controlled by the distractor-write count $n_f$ and address overlap $\rho$.
Three predictions follow~(\Cref{eq:flood_gain_floor,eq:crossover_steady,eq:residual_write_main}):
cross-direction uncertainty grows at \((1-\rho^2)\ell^2\) per step
in steady state; write gain decays to a finite limit; and the
effective write direction rotates toward residual uncertainty.
Reset-style and constant-gain updates lack these mechanisms and
should fail in distinct ways.
\paragraph{Setup.}
The overlap is fixed by \(\bar\vk_A = \ve_1\) and
\(\bar\vk_B = \rho\ve_1 + \sqrt{1-\rho^2}\ve_2\), with one-hot values
and unit-norm keys.  Each trial seeds all identities, applies $n_b$
target writes at \(B\), then applies $n_f$ distractor writes at \(A\),
querying \(\bar\vk_B\) after every step.
In the deterministic regime, models
share keys, values, schedule, and matched write gain; only the
propagation rule for \(\vP_t\) changes~(\Cref{fig:controlled_collision}a,b).
In the learned regime, the same geometry appears inside sequences with
\(K{=}8\) target--distractor pairs~(\Cref{fig:controlled_collision}c,d):
values are resampled each sequence, and addresses are supplied directly
as keys and queries, so learned projections cannot remove the imposed
overlap.

\paragraph{Models.}
The five learned controlled-collision models are Linear Attention (no
gate), GLA-style (diagonal gate), DeltaNet-style (rank-one gate,
raw key), the Bayesian Layer (rank-one gate with propagated
\(\vP_t\)), and a matched covariance-reset ablation that uses the
Bayesian Layer's gate but resets \(\bar\vP_t = \ell_t^2\vI\) at each
step.  The ``-style'' suffix is deliberate: each baseline reproduces
only the recurrent update of the named architecture, without the
short convolutions, output gating, or other orthogonal components
of full implementations such as Gated DeltaNet, so the comparison
isolates the memory update as the only variable.  DeltaNet-style,
the Bayesian Layer, and the covariance-reset ablation each carry a
single per-head scalar gate, so they are matched in capacity; the
covariance-reset ablation isolates whether propagating
\(\vP_t\)---not the gate form---is causally responsible.
The deterministic covariance diagnostics and random-key evaluation
add a sixth comparison: a diagonal-covariance Bayesian Layer that
propagates $\diag(\vP_t)$ and discards off-diagonal entries after every
update. Covariance reset, diagonal propagation, and dense propagation
therefore form an explicit reset/diagonal/dense spectrum.

\paragraph{Mechanism (a, b).}
With no parameters learned, reset-style and constant-gain updates
erase the target association once distractor writing begins (a), each
failing in the manner predicted by what it lacks: DeltaNet-style
applies the same overwrite at every distractor write (no rotation);
Linear Attention has no decay, driving retrieval steadily to zero.
The Bayesian Layer preserves recall because covariance becomes
anisotropic across target writing (b), so distractor writes rotate off
the colliding key rather than repeating along it.  The deterministic
curves match the predicted rates quantitatively
(\Cref{eq:crossover_steady,eq:flood_gain_floor};
\Cref{app:collision_diagnostics}).  In this unit-norm regime the
covariance-reset ablation is algebraically identical to
DeltaNet-style; the two separate only once \(\vP_t\) becomes
anisotropic.

\paragraph{Extrapolation under learning (c, d).}
The mechanism leaves a behavioral signature that survives end-to-end
training.  Models are trained with few distractor writes (\(n_f\!\le\!8\)) at
moderate overlap (\(\rho\!\in\![0.60, 0.80]\)); we report the
\emph{retrieval margin}
\begin{equation}
\Delta := p_B - p_A,
  \qquad
  \vy^{(B)} := \vM_t^\top \vq_B,
  \qquad
  (p_A, p_B) = \mathrm{softmax}(y_A^{(B)}, y_B^{(B)}),
\end{equation}
a pairwise softmax over only the target (\(B\)) and distractor
(\(A\)) readouts at the query---not a softmax over the full value
vocabulary, so \(\Delta\in[-1, 1]\) measures whether the memory
prefers the target association to the distractor association.  Each test
axis stresses one of the closed-form predictions above.
Distractor-write count is swept to \(n_f{=}256\) at fixed \(\rho{=}0.80\)
(c, 32\(\times\) the training maximum); this tests gain decay
\eqref{eq:flood_gain_floor}, and the Bayesian Layer remains stable
while Linear Attention and GLA-style degrade.  Overlap is swept to
\(\rho{=}0.95\) at fixed \(n_f{=}64\) (d), itself 8\(\times\) the
training maximum; write rotation is therefore tested under sustained
distractor writing \eqref{eq:residual_write_main}, and
DeltaNet-style and the covariance-reset ablation cross below zero
margin while the Bayesian Layer continues to favor the target.  The
covariance-reset ablation tracks DeltaNet-style across both axes
(\Cref{prop:isotropic_reset}), so the advantage comes from
propagating covariance history, not from gate form or capacity.
The two remaining closed-form predictions --- the steady-state
write-gain decay and the cross-direction crowding rate
\((1{-}\rho^2)\ell^2\) --- are verified in
\Cref{fig:appendix_write_gain_overlap,fig:appendix_random_key_crowding},
with proofs and the random-key task spec in
\Cref{app:collision_diagnostics,app:random_keys}.

\subsection{The Zoology MQAR benchmark}

\begin{wrapfigure}[15]{r}{0.55\textwidth}
  \vspace{-0.6\baselineskip}
  \centering
  \includegraphics[width=\linewidth]{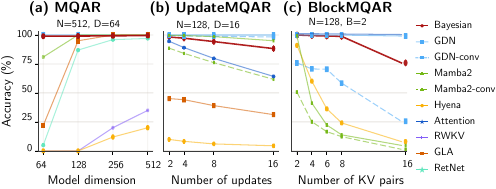}
  \caption{%
    \textbf{MQAR with shared key--value vocabulary.}
    \textbf{(a)} \textcolor[HTML]{c92c2c}{Bayesian Layer} matches the best baselines on base MQAR.
    \textbf{(b)} On Update-MQAR, \textcolor[HTML]{c92c2c}{BL} holds at ${\approx}88\%$ while \textcolor[HTML]{5b9bd5}{GDN}/\textcolor[HTML]{4d9c4d}{Mamba-2} fall to ${\approx}60\%$.
    \textbf{(c)} On Block-MQAR, conv-free \textcolor[HTML]{c92c2c}{BL} and \textcolor[HTML]{5b9bd5}{GDN} bridge the key--value gap.
  }
  \label{fig:update_mqar}
\end{wrapfigure}
\label{sec:zoology_mqar}
\S\ref{sec:controlled_collision} evaluates the overwrite mechanism in settings where the address geometry is explicit. Here we ask whether the same mechanism emerges in MQAR~\citep{arora2024zoology}, where representations must be learned from discrete tokens.

We modify MQAR to sample \textit{keys} and \textit{values} from the same vocabulary, removing the simple role shortcut.
We then test the Bayesian Layer on this task and two stress variants: \emph{Update-MQAR}, which rewrites some keys with new values,
and \emph{Block-MQAR}, which presents \textit{key--value} pairs in a block, testing whether the model can bind associations when the \textit{value} token does not immediately follow the \textit{key} token.

\paragraph{Results.}
On base MQAR, the Bayesian Layer approaches near-perfect recall at
larger widths and is competitive with the strongest baselines (\Cref{fig:update_mqar}a). Results except those for the
Bayesian Layer are taken from \citet{poli2024mechanistic, yang2024parallelizing}.
On both Update-MQAR and Block-MQAR, the Bayesian Layer retains high
accuracy, indicating that it preserves associations across overwrites and across gaps in the input stream (\Cref{fig:update_mqar}b, c).

Block-MQAR requires bridging both the \textit{key}--\textit{value} gap and the \textit{value}--\textit{query} gap; unlike models that rely on local mixing to bridge the first, the Bayesian Layer bridges both without local mixing.


\subsection{From-scratch language modeling and distillation into a pretrained model}
\label{sec:slimpajama}

We test Bayesian Layers (BLs) in two language-modeling settings:
training from scratch under matched state budgets, and distilling BLs
into a larger pretrained model.

\paragraph{From-scratch comparison.}
We train BL, Gated DeltaNet (GDN), and Mamba-2 from scratch on
WikiText-103 \citep{merity2017pointer} using matched recurrent-state budgets and three seeds per
model. \Cref{app:from_scratch_wikitext} gives the model, training, and
evaluation details.

To isolate associative recall, we report perplexity on the full test set
and, separately, on associative-recall hits (AR-hits) and other tokens
(\Cref{tab:from_scratch_wikitext}). Following \citet{arora2024zoology}, we
define an AR-hit as a token that completes a token pair whose most recent
previous occurrence lies within 1,024 tokens in the same window, and whose
count in the 130M-token training corpus is at most 16. Other tokens are
all remaining predicted tokens.

\begin{table}[!ht]
	\centering
	\scriptsize
	\setlength{\tabcolsep}{5pt}
	\renewcommand{\arraystretch}{1.15}
	\caption{\textbf{From-scratch WikiText-103 perplexity.}
		Token perplexity under the shared 32k byte-pair encoding (BPE)
		tokenizer, evaluated over all tokens, AR-hits, and all other tokens. Distance columns include AR-hits
		only. Values are mean $\pm$ standard deviation over three seeds;
		lower is better.}
	\label{tab:from_scratch_wikitext}
	\resizebox{\textwidth}{!}{%
		\begin{tabular}{lrrrrrr}
			\toprule
			& \multicolumn{3}{c}{Token slice} & \multicolumn{3}{c}{AR-hit distance (tokens)} \\
			\cmidrule(lr){2-4}\cmidrule(lr){5-7}
			Model & Overall & AR-hit & Other & 1--64 & 65--256 & 257--1,024 \\
			\midrule
			Bayesian Layer
				& $20.01 \pm 0.15$ & $\mathbf{18.15 \pm 1.08}$ & $20.03 \pm 0.15$
				& $\mathbf{7.21 \pm 0.52}$ & $\mathbf{13.91 \pm 1.00}$ & $\mathbf{92.1 \pm 8.9}$ \\
			Gated DeltaNet
				& $\mathbf{18.75 \pm 0.07}$ & $23.72 \pm 1.76$ & $\mathbf{18.69 \pm 0.08}$
				& $10.09 \pm 1.07$ & $19.57 \pm 2.07$ & $99.9 \pm 0.7$ \\
			Mamba-2
				& $21.36 \pm 0.27$ & $424.8 \pm 120.4$ & $20.58 \pm 0.19$
				& $380.6 \pm 151.3$ & $398.9 \pm 117.8$ & $544.5 \pm 58.3$ \\
			\bottomrule
		\end{tabular}%
	}
\end{table}

BL remains comparable to GDN overall and on other tokens. On AR-hits,
however, BL has lower perplexity than GDN. Mamba-2 is also comparable
overall and on other tokens, but performs poorly on AR-hits, consistent
with the recall gap reported in \citep{arora2024zoology,arora2024simple}.
We further
stratify AR-hits by distance to the token pair's most recent previous
occurrence. Perplexity rises with distance for all models, consistent
with harder retrieval over longer intervals, and BL shows lowest mean
perplexity in every distance bucket. Mamba-2 performs poorly even at 1--64 tokens, so its recall deficit is not
limited to long distances.

\paragraph{Distillation into a pretrained model.}
\begin{figure}[h!]
	\centering
	\includegraphics[width=\linewidth]{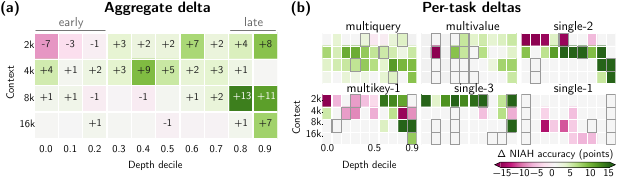}
	\caption{%
		\textbf{Bayesian Layer improves long-context retrieval.}
		\textbf{(a)} Mean RULER NIAH accuracy gain of BL over GDN-zero by (context length, depth decile); \textcolor[HTML]{4d9c4d}{green} cells favor BL, \textcolor[HTML]{cf4d8a}{pink} cells favor GDN-zero. Depth $0.0$ is earliest in context, $0.9$ is latest.
		\textbf{(b)} Same delta broken out by NIAH subtask on a $[-15, +15]$-point scale. Gains concentrate on multi-target retrieval (\texttt{multivalue}, \texttt{multiquery}); the early-2k regression in (a) is driven by \texttt{single\_2}. Gray-bordered cells have $n<30$ samples; matched-compute control (BL vs.\ GDN-FT) is in Appendix \Cref{fig:spj-gdn-ft-position}.
	}
	\label{fig:slimpajama}
\end{figure}
We test whether a small number of BLs can improve
long-context retrieval without inducing a generic language-modeling
trade-off.
Starting from a 340M GDN, we replace the four layers at indices
\(\{5,11,17,23\}\) with BLs and distill from the frozen GDN. This design
follows recent
distillation-based conversions of pretrained models to alternative
recurrent
architectures~\citep{junxiongdaniele2024mambainllama,lan2025liger,lenz2025jamba}.

The distillation has two phases: per-layer NMSE on 100M tokens per layer
in parallel, followed by joint logit-KL and variance-normalized
boundary-MSE on 300M tokens. The full procedure uses 400M tokens and one
training seed. It adapts 4 of 24 layers using approximately $2.7\%$ as
many data tokens as the checkpoint's reported 15B-token SlimPajama
pretraining corpus.  The full two-phase
protocol, loss definitions, and hyperparameters are in
\Cref{app:slimpajama_details}. 

On RULER NIAH~\citep{hsieh2024ruler}, BL gains +1.3 / +2.9 / +3.2 /
+1.0 over GDN-zero at 2k / 4k / 8k / 16k
(\Cref{tab:spj-ruler-full}). These retrieval gains come with a
2.5--2.7\% perplexity increase on WikiText-103, PG-19, and
SlimPajama-heldout (\Cref{tab:spj-ppl-full}; flat in length, no
crossover), and a 7.5-point regression on \texttt{niah\_single\_2} at
training length. The trade-off is task-specific: the perplexity cost is
uniform across context lengths rather than a length-extrapolation
failure.

Extra optimization provides a direct alternative explanation for these
gains. We therefore continue GDN for the same 400M tokens with the same
four layers unfrozen. This matched-compute GDN-FT control stays within
0.3\% of GDN-zero perplexity on all three corpora and reproduces none of
the long-context gains: mean NIAH is 48.5 / 31.6 / 16.1 / 8.4 for
GDN-FT, 49.5 / 31.1 / 16.8 / 7.9 for GDN-zero, and 50.8 / 34.0 / 20.0
/ 8.8 for BL (full breakdowns in
\Cref{tab:spj-ruler-full,tab:spj-ppl-full}). The BL $-$ GDN-FT
positional pattern matches BL $-$ GDN-zero
(\Cref{fig:spj-gdn-ft-position}). This control attributes the
improvement to the inserted Bayesian architecture rather than to extra
optimization.

The retrieval gains also depend on position (\Cref{fig:slimpajama}a).
At 4k, BL improves over GDN at most depths and peaks at +8.9 in the
0.4--0.5 bucket, the lost-in-the-middle
regime~\citep{liu2024lost}. At 8k, the advantage shifts to recent
evidence and reaches +13.0 and +11.0 in the final two deciles, where
both models otherwise degrade. At 2k, the regression is confined to
early positions (peak -7.4 at 0.0--0.1) and reverses to +8.3 at the
latest decile. The same depth-conditioned pattern appears across all
six RULER tasks (\Cref{fig:slimpajama}b) and is strongest on
multi-target retrieval. Thus, the distilled BLs reshape retrieval by
task and position rather than uniformly shifting aggregate accuracy.

\section{Discussion}
\label{sec:discussion}

\paragraph{Relation to concurrent probabilistic sequence layers.}
Several recent and concurrent works also cast efficient sequence
layers as probabilistic memory or test-time inference. Palimpsa~\citep{bonnet2026learning} treats in-context learning as
continual learning, using a diagonal importance state to recover gated
linear attention variants and Mamba-2 as posterior approximations.
Kalman Linear Attention~\citep{shaj2026kalman} uses an information-form
diagonal linear--Gaussian filter whose M\"obius precision updates can
be composed by associative scans. 
MesaNet~\citep{vonoswald2026mesa} maintains dense regression sufficient statistics, but solves a static reweighted in-context regression by conjugate gradient at each readout, rather than filtering an evolving recurrent state. The Bayesian Layer combines the complementary strengths of these lines: it performs recursive filtering like Palimpsa and KLA, but with a dense key-space covariance; and it maintains dense statistics like MesaNet, but propagates them through structured dynamics and process noise. Off-diagonal uncertainty therefore controls memory writes, suppresses redundant updates, and protects confident associations, while readout remains a direct projection of the posterior mean.

\paragraph{Limitations.}
The linear-Gaussian design model is one entry point in a larger
space: non-linear dynamics, structured observation models, and
exponential-family designs whose belief itself leaves the Gaussian
family --- categorical, count-valued, or otherwise --- would each
yield a different layer under the same construction, and we have
not explored those instances here.
Empirically, we characterize the Bayesian Layer at the scales
reported in our experiments, leaving its behavior at substantially
larger model and context budgets open.  Computationally, each
Bayesian head carries a grouped covariance state with \(G\) dense
\(D \times D\) blocks alongside its mean memory, of total recurrent
size \(\mathcal{O}(Dm + GD^2)\) and per-token cost
\(\mathcal{O}(GD^2)\), with quadratic working storage in $D$. In our
340M-model retrofit, we trained and evaluated dense $256\times256$
covariance blocks across four heads in each of four replaced layers at
context lengths up to 16k. Larger head dimensions or more replaced
layers increase computation and storage beyond this evaluated regime.
Low-rank, sparse, or otherwise structured covariance approximations
could reduce this cost and are left for future work. Finally, the
readout consumes only the
posterior mean: the propagated covariance shapes the write rule but
is not exposed to downstream uncertainty-aware losses or
calibration, both of which we leave to future work.

\label{sec:conclusion}
\paragraph{Conclusion.}
We introduced the design-model framework, in which a tractable auxiliary
probabilistic model determines the write rule of a recurrent sequence layer
via exact Bayesian filtering, while a separate readout converts the belief
state into predictions.  Under a linear-Gaussian design model, this yields the
Bayesian Layer: a recurrence that propagates both a mean state and a
covariance state, so that write direction, gain, and gating are shaped by the
accumulated uncertainty over stored evidence.  The same
framework provides a unifying lens on existing architectures ---
Delta-rule models arise as covariance-reset reductions, while additive
recurrences (linear attention, GLA, Mamba-2 / SSD) are exact under a
different auxiliary model within the same construction
(Table~\ref{tab:connections}). Empirically, restoring covariance propagation
consistently improves robustness under interference, extrapolation beyond the
training regime, and the Zoology MQAR benchmark. In the SlimPajama retrofit,
the gains are concentrated in long-context retrieval, are not reproduced by
a matched-compute control, and come with a 2.5--2.7\%
held-out-perplexity gap that remains stable across context lengths. The
propagated covariance and the design-model framework make distinct
contributions: covariance gives the layer an uncertainty-adaptive write rule
whose gain at a key depends on what memory has already resolved there,
whereas the framework derives this rule and its fixed-gain reductions from
explicit probabilistic assumptions. We view the design-model framework less
as a single architecture and more as a design principle: specify the
probabilistic assumptions governing memory, and let inference determine the
update rule.

\section*{Acknowledgments}
M.D., H.J., and I.M.P.\ were supported by NIH RF1-DA056404 and by the
Portuguese Recovery and Resilience Plan (PRR) through project 62
(Center for Responsible AI), and by Portuguese national funds through
FCT (Funda\c{c}\~ao para a Ci\^encia e a Tecnologia) in the context of
project UIDB/04443/2020.  M.D.\ and C.S.\ were supported by NIH NINDS
awards R01NS127122 and 1RF1NS127122-01.

\paragraph{Competing interests.}
I.M.P.\ and M.D.\ are co-founders of RyvivyR Inc.  RyvivyR had no
role in the design, analysis, or writing of this manuscript.

\bibliographystyle{plainnat}
\bibliography{ref}

\appendix

\section{Write-Driven Latent-Input Design Model}
\label{app:latent_input}

The bottom block of Table~\ref{tab:connections} rests on a different
design model in which $\vv_t$ observes the current write coefficient
rather than the memory state.

\paragraph{Model.}
Let $\vS_t \in \reals^{D\times m}$ be the accumulated memory and
$\vphi_t \in \reals^m$ a latent write coefficient drawn independently at
each step.  Given
$\vA_t=\vA(\vx_t)$, $\vk_t=\vk(\vx_t)$, prior variance
$\lambda_t=\lambda(\vx_t)$, and observation-noise scale $r_t^2=r^2(\vx_t)$,
the latent-input model is
\begin{align}
	\vphi_t          & \sim \mathcal{N}(\mathbf{0},\lambda_t \vI),
	\label{eq:li_latent}                                           \\
	\vS_t            & = \vA_t \vS_{t-1} + \vk_t \vphi_t^\top,
	\label{eq:li_state}                                            \\
	\vv_t \mid \vphi_t & \sim
	\mathcal{N}\!\bigl(\vphi_t,\; r_t^2 \vI\bigr).
	\label{eq:li_obs}
\end{align}
In the Bayesian Layer's design model (\Cref{def:design_model}), $\vv_t$
observes the memory state $\vS_t$ and the exact filter must propagate
uncertainty over that state.  Here, $\vv_t$ observes the write coefficient
$\vphi_t$, so the filter needs only estimate $\vphi_t$ from $\vv_t$ at each step.

\paragraph{Exact filtered mean update.}
Conditioning \eqref{eq:li_obs} on the Gaussian prior
\eqref{eq:li_latent} gives
\[
	\E[\vphi_t \mid \vv_t]
	= \omega_t \vv_t,
	\qquad
	\omega_t := \frac{\lambda_t}{\lambda_t + r_t^2}.
\]
To derive the mean update, let
$\mathcal{G}_t := \sigma(\vx_{1:t})$ be the observed history.  Since
$\vA_t$, $\vk_t$, and $\vv_t$ are deterministic
functions of the current observation, they are $\mathcal{G}_t$-measurable.
Define the filtered memory mean
$\vM_t := \E[\vS_t \mid \mathcal{G}_t]$.  Taking the conditional expectation of
\eqref{eq:li_state} gives
\begin{align}
	\vM_t
	 & = \E[\vA_t \vS_{t-1} + \vk_t \vphi_t^\top \mid \mathcal{G}_t] \notag \\
	 & = \vA_t \E[\vS_{t-1} \mid \mathcal{G}_t]
	+ \vk_t \E[\vphi_t \mid \mathcal{G}_t]^\top.
	\label{eq:li_condexp_step}
\end{align}
Because $\vphi_t$ is independent of $\vS_{t-1}$, and $\vv_t$ depends on
the past only through $\vphi_t$:
\[
	\E[\vS_{t-1} \mid \mathcal{G}_t]
	= \E[\vS_{t-1} \mid \mathcal{G}_{t-1}]
	= \vM_{t-1},
	\qquad
	\E[\vphi_t \mid \mathcal{G}_t]
	= \E[\vphi_t \mid \vv_t]
	= \omega_t \vv_t.
\]
Substituting into \eqref{eq:li_condexp_step} yields
\begin{equation}
	\vM_t
	= \vA_t \vM_{t-1} + \omega_t \vk_t \vv_t^\top.
	\label{eq:li_exact}
\end{equation}
Defining the reparameterized write vector
$\tilde{\vv}_t := \omega_t \vv_t$ gives the unit-write form
$\vM_t = \vA_t \vM_{t-1} + \vk_t \tilde{\vv}_t^\top$,
which is the form most additive architectures are written in.
The standard unit-gain linear-attention update is recovered either in the
noiseless limit $r_t^2 \to 0$, when $\vv_t$ directly reveals $\vphi_t$, or by
absorbing $\omega_t$ into the value map.

The readout is the same as for the Bayesian Layer:
$\sfy_t = \vM_t^\top \vq_t$.  Specializing $\vA_t$ recovers the additive
members of Table~\ref{tab:connections}: linear
attention~\citep{Katharopoulos2020-cz} ($\vA_t = \vI$),
RetNet~\citep{sun2023retnet} ($\vA_t = \rho_h \vI$),
the diagonal form of GLA~\citep{yang2024gated}
($\vA_t = \diag(\boldsymbol{\alpha}_t)$), and
the SSD / Mamba-2 core~\citep{dao2024transformers}
($\vA_t = a_t \vI$).

\section{Error-Correction Form and Connection to the Kalman Filter}
\label{app:error_correction}

The mean update~\eqref{eq:mean_update} admits an equivalent
error-correction form that makes the Kalman filter connection explicit.
Define the predicted mean $\bar{\vM}_t = \vA_t\,\vM_{t-1}$ and the
predicted observation $\hat{\vv}_t = \bar{\vM}_t^{\top}\vk_t$.
Expanding~\eqref{eq:mean_update}:
\begin{align}
	\vM_t
	 & = (\vI - \beta_t\,\vu_t\,\vk_t^{\top})\,\vA_t\,\vM_{t-1}
	+ \beta_t\,\vu_t\,\vv_t^{\top} \notag                       \\
	 & = \bar{\vM}_t
	+ \beta_t\,\vu_t\bigl(\vv_t - \hat{\vv}_t\bigr)^{\top}
	\;=\; \vA_t\,\vM_{t-1}
	+ \beta_t\,\vu_t\bigl(\vv_t - \hat{\vv}_t\bigr)^{\top}.
	\label{eq:error_correction}
\end{align}
This is the standard Kalman measurement update applied column-wise to
$\vM_t$, with rank-one Kalman gain
$\mathbf{K}_t = \beta_t\,\vu_t = \beta_t\,\bar{\vP}_t\,\vk_t$
(shaped by propagated uncertainty), rank-one observation matrix
$\mathbf{C}_t = \vk_t^{\top}$, and innovation
$\ve_t = \vv_t - \hat{\vv}_t$.  The covariance
recursion~\eqref{eq:cov_update} matches the corresponding standard
form $\vP_t = \bar{\vP}_t - \mathbf{K}_t\,\mathbf{C}_t\,\bar{\vP}_t$,
using $\vk_t^{\top}\bar{\vP}_t = \vu_t^{\top}$ (since $\bar{\vP}_t$
is symmetric).

The Bayesian Layer is thus a learned, matrix-valued Kalman filter in
which the observation $\vv_t = \vv(\vx_t)$, observation direction
$\vk_t = \vk(\vx_t)$, and dynamics $\vA_t = \vA(\vx_t)$ are all
input-dependent functions learned from data.  What is fixed by the design
model is the predict--update structure and the dependence of the gain on
the propagated covariance; what is learned is the content that flows
through that structure at each step.

\paragraph{Distinction from xLSTM's matrix-valued state.}
xLSTM's mLSTM also maintains a matrix-valued recurrent state, but its
``covariance update rule'' is the gated associative-memory recursion
$\vC_t = f_t \vC_{t-1} + i_t \vv_t \vk_t^\top$ together with the key
normalizer $\vn_t = f_t \vn_{t-1} + i_t \vk_t$; retrieval then queries
$\vC_t$ directly via $\vC_t \vq_t$, normalized by $\vn_t^\top \vq_t$
\citep{beck2024xlstm}.  Here ``covariance'' is in the classical
outer-product / fast-weight sense, not a posterior covariance: $\vC_t$
stores values, need not be symmetric or positive semidefinite, and is
itself the content state read at the next step.  The Bayesian Layer's
second-order state is instead the posterior covariance $\vP_t$, obeying
the Riccati/Kalman recursion
$\bar{\vP}_t = \vA_t \vP_{t-1} \vA_t^\top + \ell^2 \vI$,
$\vP_t = \bar{\vP}_t - \beta_t \vu_t \vu_t^\top$.  It stores no values
and never enters readout directly; it only shapes the \emph{next} write
through the warped address $\vu_t = \bar{\vP}_t \vk_t$, the gain, and
the gate.  The distinction is therefore not merely that both models
carry matrix-valued recurrent states, but that xLSTM uses gated
associative content memory whereas the Bayesian Layer propagates
filtered uncertainty.

\section{Kronecker Closure Under the Design Model}
\label{app:kronecker_closure}

We verify that the design model (\Cref{def:design_model}) preserves the
Kronecker belief structure $\vI_m \otimes \vP_t$ at every predict--update
step.

\paragraph{Prediction.}
Suppose $\mu_{t-1}$ has covariance $\vI_m \otimes \vP_{t-1}$.  In
vectorized form, the dynamics matrix is
$\tilde{\vA}_t = \vI_m \otimes \vA_t$ and the process noise covariance is
$\ell_t^2\vI_{Dm} = \vI_m \otimes \ell_t^2\vI_D$.  The predicted covariance is
\begin{equation}
	\tilde{\vA}_t (\vI_m \otimes \vP_{t-1}) \tilde{\vA}_t^\top + \vI_m \otimes \ell_t^2\vI_D
	= \vI_m \otimes (\vA_t\vP_{t-1}\vA_t^\top + \ell_t^2\vI_D)
	= \vI_m \otimes \bar{\vP}_t,
\end{equation}
which retains the Kronecker form.

\paragraph{Update.}
The observation model
$\vv_t \mid \vS_t \sim \mathcal{N}(\vS_t^\top\vk_t, r_t^2\vI_m)$
has observation matrix $\vC_t = \vI_m \otimes \vk_t^\top$ acting on
$\vecOp(\vS_t)$ and observation noise $\vR = r_t^2\vI_m$.  The Kalman covariance
update in $Dm$-dimensional space is
\begin{align}
	\vI_m \otimes \vP_t
	 & = \vI_m \otimes \bar{\vP}_t
	- (\vI_m \otimes \bar{\vP}_t)(\vI_m \otimes \vk_t) \notag                \\
	 & \quad\times
	\bigl(r_t^2\vI_m + (\vI_m \otimes \vk_t^\top)(\vI_m \otimes \bar{\vP}_t)(\vI_m \otimes \vk_t)\bigr)^{-1}
	(\vI_m \otimes \vk_t^\top)(\vI_m \otimes \bar{\vP}_t) \notag            \\
	 & = \vI_m \otimes \bar{\vP}_t
	- (\vI_m \otimes \bar{\vP}_t\vk_t)
	\bigl(\vI_m(r_t^2 + \vk_t^\top\bar{\vP}_t\vk_t)\bigr)^{-1}
	(\vI_m \otimes \vk_t^\top\bar{\vP}_t) \notag                            \\
	 & = \vI_m \otimes \bigl(\bar{\vP}_t - \beta_t\,\vu_t\,\vu_t^\top\bigr)
	= \vI_m \otimes \vP_t,
\end{align}
where $\vu_t = \bar{\vP}_t\vk_t$ and
$\beta_t = (r_t^2 + \vk_t^\top\vu_t)^{-1}$.  The Kronecker structure is
preserved, so all $m$ columns share the column-covariance $\vP_t$ at every
step.  This justifies the $D \times D$ covariance recursion in
\Cref{prop:update} rather than the full $Dm \times Dm$ recursion.

\section{Filtering implementation}
\label{app:chunk_parallel_filtering}

\paragraph{Specialised recursion.}
Take scalar observation noise $\vR_t = r_t^2 \vI$ and diagonal
$\vA_t$.  The recursion of \Cref{prop:update} specialises to
\begin{align}
	\bar{\vP}_t & = \vA_t \vP_{t-1} \vA_t^\top + \ell_t^2 \vI, \label{eq:eff_predict} \\
	\vP_t       & = \bar{\vP}_t - \beta_t \vu_t \vu_t^\top, \label{eq:eff_update}
\end{align}
with $\vu_t = \bar{\vP}_t \vk_t$ and
$\beta_t = (r_t^2 + \vk_t^\top \vu_t)^{-1}$;
the mean update of \Cref{prop:update} is unchanged.  Diagonal
$\vA_t$ makes the prediction step a row-wise scaling and the update
step a rank-one downdate of $\bar\vP_t$.  We parameterise
$\ell_t^2$ and $r_t^2$ as $\mathrm{softplus}(\cdot)+\epsilon$ to
enforce strict positivity.

\paragraph{Decoupling: the mean update is structurally DeltaNet.}
The covariance recursion~\eqref{eq:eff_predict}--\eqref{eq:eff_update}
depends only on $(\vP_{t-1}, \vk_t)$, independent of $\vM_{t-1}$ and
$\vv_t$.  Once the gain trajectory $\vK_t = \beta_t \vu_t$ is known,
the mean recurrence is purely affine:
\begin{equation}
	\vM_t = (\vI - \vK_t \vk_t^\top)\,\vA_t\,\vM_{t-1} + \vK_t \vv_t^\top.
	\label{eq:eff_M_affine}
\end{equation}
The transition $(\vI - \vK_t\vk_t^\top)\vA_t$ is a rank-one
perturbation of the diagonal dynamics --- exactly the DeltaNet
recurrence~\citep{schlag2021linear}, with the covariance-derived gain
$\vK_t$ replacing DeltaNet's learned input-dependent gate.  Given the
gain trajectory, the mean update is therefore a chunkwise
linear-attention recurrence, and the chunkwise machinery
of~\citet{yang2024parallelizing,yang2025gated} applies directly.

\paragraph{Chunkwise scan.}
We partition the sequence into chunks of length $L$ and process chunk
boundaries sequentially: each chunk inherits exit state $(\vP, \vM)$
from the previous chunk, advances the filter through its $L$ steps,
and emits its own exit state.  Within a chunk we apply the decoupling
locally --- a covariance pass over the chunk's $\tau$ steps produces
the per-$\tau$ gains $(\vu_\tau, \beta_\tau)$ from the chunk-entry
covariance alone, after which the affine mean
update~\eqref{eq:eff_M_affine} is computed in parallel via the
WY-factored chunkwise readout of~\citet{yang2024parallelizing}.
Within-chunk contributions are expressed through forward transforms
$\Phi_{\tau\leftarrow\text{entry}}$ and
$\Phi_{\text{exit}\leftarrow\tau}$; because $\vA_t$ is contractive
($\|\vA_t\|\le 1$), every chunk-level transform has spectral radius
bounded by $1$, keeping accumulators bounded at long context.  Total
work is $\mathcal{O}(TGD^2)$ --- single-pass arithmetic --- with
within-chunk computation parallelised on the GPU.  We implement the
within-chunk covariance pass as a custom Triton kernel paired with
a matching custom backward; intra-chunk and chunk-boundary arithmetic
are carried in fp32.

\paragraph{Diagonal real and complex eigenvalues.}
$\vA_t$ is parameterised as a block-diagonal matrix of damped real
$2\times 2$ rotations, equivalent to complex diagonal eigenvalues
$\rho_t e^{i\theta_t}$ on interleaved real--imaginary pairs.  Because
the readout queries only the real component of each pair, we inject
process noise only on the real rows: the process-noise covariance is
$\ell_t^2\,\vD$ with $\vD = \diag(1,0,1,0,\ldots,1,0)$.  Isotropic
noise on the full $2D$ state would inflate the imaginary rows ---
which the update never observes --- producing a divergent covariance
over long sequences with no observable benefit.  The analyses of
\Cref{sec:anatomy} apply verbatim on the observable real-row
subspace.

\paragraph{Per-column and grouped observation noise.}
\label{app:per_column_gating}
\label{app:grouped_noise}
The base design model uses scalar $r^2$, so all $m$ columns of
$\vS_t$ share one belief covariance and gain.  Replacing $\vR$ with
$\diag(r_1^2, \ldots, r_m^2)$ breaks this symmetry: each column $i$
gets its own covariance and gain.  Maintaining $m$ independent
covariances is prohibitive when $m$ is large; in practice we use a
\emph{grouped} variant with $G$ groups and block-diagonal
$\vR = \diag[r_1^2 \vI, \ldots, r_G^2 \vI]$, learning input-dependent
$r_g(\vx_t)$ per group.  $G=1$ recovers the base case, $G=m$
recovers per-column gating; we use $G \ll m$ in all experiments.

\paragraph{State and compute.}
Per active sequence the implementation maintains a single mean
$\vM \in \reals^{D \times m}$ and one $D\times D$ covariance per
group as running state ($\mathcal{O}(Dm + GD^2)$ total); per-token
states are not persisted.  The chunkwise scan keeps only a bounded
window of chunks live at once, controlling activation memory.  The
SlimPajama distillation experiment runs the Bayesian Layer with
$n_{\text{heads}} = 4$, $d_{\text{head}} = 256$, and hidden width
$d_{\text{model}}=1024$, matching the GDN-340M head decomposition.

\begin{table*}[t]
  \centering
  \scriptsize
  \setlength{\tabcolsep}{3.5pt}
  \renewcommand{\arraystretch}{1.18}
  \caption{\textbf{Optimization, tuning, budget, and repetition settings.}
  MQAR learning rates are selected by validation query accuracy. Each
  reported MQAR curve is a single retained final run and is not averaged
  across seeds.}
  \label{tab:consolidated_settings}
  \resizebox{\textwidth}{!}{%
  \begin{tabular}{@{}lllll@{}}
    \toprule
    \textbf{Experiment} & \textbf{Optimizer and schedule} & \textbf{Tuning protocol} & \textbf{Budget} & \textbf{Repetitions} \\
    \midrule
    Deterministic collision
      & none; fixed $r^2{=}\ell^2{=}0.05$
      & none
      & one exact 112-step trajectory per configuration
      & exact trajectory \\
    Learned controlled recall
      & AdamW; $\beta{=}(0.9,0.999)$; wd $10^{-4}$; clip 1.0; LR $3\!\times\!10^{-4}$
      & one shared setting; no per-model tuning
      & batch 256; 2,500 steps; query-token CE
      & 3 seeds; mean $\pm1$ std. \\
    Base MQAR
      & AdamW; $\beta{=}(0.9,0.95)$; $\epsilon{=}10^{-8}$; wd 0.1; 1,024-step warmup; cosine to $0.1\times$ peak
      & LR $\in\{10^{-4},3\!\times\!10^{-4},10^{-3}\}$
      & batch 256; 40 epochs; 8,192 recurrent-state parameters/layer
      & 1 retained final run/curve \\
    Update-MQAR
      & same MQAR optimizer and schedule
      & LR $\in\{5\!\times\!10^{-4},10^{-3},3\!\times\!10^{-3},5\!\times\!10^{-3}\}$
      & batch 256; 40 epochs; 8,192 recurrent-state parameters/layer
      & 1 retained final run/curve \\
    Block-MQAR
      & same MQAR optimizer and schedule
      & LR $\in\{10^{-4},3\!\times\!10^{-4},5\!\times\!10^{-4},10^{-3}\}$
      & batch 256; 40 epochs; 8,192 recurrent-state parameters/layer
      & 1 retained final run/curve \\
    From-scratch WikiText-103
      & AdamW; $\beta{=}(0.9,0.95)$; wd 0.01; clip 1.0; LR $6\!\times\!10^{-4}$; 4\% warmup; cosine to $0.1\times$ peak
      & LR probed on GDN only, then shared across arms
      & batch 65,536 tokens; 4 epochs (520M tokens; 7,935 steps); sequence length 2,048
      & 3 seeds/model; paired streams \\
    Retrofit Phase A
      & AdamW; wd 0; clip 1.0; LR $5\!\times\!10^{-4}$; warmup 25; cosine to $0.1\times$ peak; bf16
      & fixed per-layer distillation recipe
      & 100,007,936 tokens/layer in parallel; 1,526 steps
      & 1 seed \\
    Retrofit Phase B / GDN-FT
      & AdamW; wd 0; clip 1.0; LR $3\!\times\!10^{-4}$; warmup 50; cosine to $0.1\times$ peak; bf16
      & identical budget, trainable mask, and schedule; objective differs
      & 300,285,952 tokens; 8-GPU DDP
      & 1 seed \\
    \bottomrule
  \end{tabular}%
  }
\end{table*}
\section{Controlled Collision Recall}
\label{app:collision_analysis}
\label{app:closed_forms}
\label{app:collision_diagnostics}

This section gives the formal statements and proofs for the collision
dynamics summarised in \Cref{sec:anatomy}, followed by the full
specifications of the deterministic and learned controlled-collision
experiments of \Cref{sec:controlled_collision}.

\subsection{Cross-Key Contraction and Repeated Distractor-Write Fixed Points}
\label{app:cross_key_balance}

\begin{proposition}[Cross-key contraction--replenishment balance]
	\label{prop:crossover}
	Consider the Bayesian Layer with $\vA_t = \vI$ and constant scalar
	noise scales $r$ and $\ell$.  Let $\vk_A, \vk_B$ be unit-norm keys, and suppose that step $t$
	writes along $\vk_A$.  Define
	$\bar\sigma_A^2 := \vk_A^\top \bar{\vP}_t \vk_A$,
	$\bar\sigma_B^2 := \vk_B^\top \bar{\vP}_t \vk_B$,
	and the covariance-weighted correlation
	\[
		\rho_{\bar{\vP}_t}(\vk_A,\vk_B)
		:=
		\begin{cases}
			\dfrac{\vk_B^\top \bar{\vP}_t \vk_A}{\sqrt{\bar\sigma_A^2\,\bar\sigma_B^2}},
			   & \bar\sigma_A^2\bar\sigma_B^2 > 0, \\[8pt]
			0, & \bar\sigma_A^2\bar\sigma_B^2 = 0.
		\end{cases}
	\]
	Then the net change in directional uncertainty along $\vk_B$ is
	\begin{align}
		\Delta_B
		 & := \vk_B^\top \vP_t\,\vk_B - \vk_B^\top \vP_{t-1}\,\vk_B \nonumber \\
		 & = \ell^2
		- \frac{\bigl(\vk_B^\top\bar{\vP}_t\,\vk_A\bigr)^2}
		{r^2 + \bar\sigma_A^2}
		= \ell^2
		- \bar\sigma_B^2\,\rho_{\bar{\vP}_t}(\vk_A,\vk_B)^2
		\frac{\bar\sigma_A^2}{r^2+\bar\sigma_A^2}.
		\label{eq:crossover}
	\end{align}
	Consequently,
	\begin{equation}
		0
		\le
		\bar\sigma_B^2\,\rho_{\bar{\vP}_t}(\vk_A,\vk_B)^2
		\frac{\bar\sigma_A^2}{r^2+\bar\sigma_A^2}
		\le
		\bar\sigma_B^2 \frac{\bar\sigma_A^2}{r^2+\bar\sigma_A^2}
		\le \bar\sigma_B^2.
		\label{eq:crossover_bound}
	\end{equation}
\end{proposition}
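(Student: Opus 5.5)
This follows directly from the covariance recursion of \Cref{prop:update} specialised to $\vA_t=\vI$ and constant $r,\ell$. The plan is to compute $\vk_B^\top\vP_t\vk_B$ by sandwiching both steps of the recursion with $\vk_B$, then rewrite the result using the covariance-weighted correlation, and finally bound the correlation factor with Cauchy--Schwarz.

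First, with $\vA_t=\vI$ the predicted covariance is $\bar\vP_t=\vP_{t-1}+\ell^2\vI$, so
\[
\vk_B^\top\bar\vP_t\vk_B=\vk_B^\top\vP_{t-1}\vk_B+\ell^2\|\vk_B\|^2=\vk_B^\top\vP_{t-1}\vk_B+\ell^2
\]
by unit norm. Next, the update step with write key $\vk_A$ gives $\vP_t=\bar\vP_t-\beta_t\vu_t\vu_t^\top$, where $\vu_t=\bar\vP_t\vk_A$ and $\beta_t=(r^2+\bar\sigma_A^2)^{-1}$. Sandwiching with $\vk_B$ and using symmetry of $\bar\vP_t$ gives $\vk_B^\top\vu_t=\vk_B^\top\bar\vP_t\vk_A$. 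Subtracting $\vk_B^\top\vP_{t-1}\vk_B$ then yields the first expression in \eqref{eq:crossover}.

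For the second expression, there are two cases.
\begin{itemize}
\item When $\bar\sigma_A^2\bar\sigma_B^2>0$, substitute $(\vk_B^\top\bar\vP_t\vk_A)^2=\rho^2\bar\sigma_A^2\bar\sigma_B^2$, where $\rho=\rho_{\bar\vP_t}(\vk_A,\vk_B)$. This is just the definition of $\rho$.
\item When the product vanishes, $\rho$ is defined to be $0$, so I must show $\vk_B^\top\bar\vP_t\vk_A=0$. Since $\vP_{t-1}$ is PSD (the recursion preserves positive semidefiniteness, as noted after \Cref{prop:update}), $\bar\vP_t$ is PSD. Cauchy--Schwarz in the semi-inner product $\langle x,y\rangle=x^\top\bar\vP_t y$ then gives $(\vk_B^\top\bar\vP_t\vk_A)^2\le\bar\sigma_A^2\bar\sigma_B^2=0$. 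In fact $\bar\vP_t\succeq\ell^2\vI$, so this case cannot occur when $\ell>0$.
\end{itemize}

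For the bounds in \eqref{eq:crossover_bound}, the same Cauchy--Schwarz inequality gives $\rho^2\le1$, which is the middle inequality. Nonnegativity holds because every factor is nonnegative. The last inequality holds because $\bar\sigma_A^2/(r^2+\bar\sigma_A^2)\le1$, given $r^2\ge0$ and $r^2+\bar\sigma_A^2>0$.

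The only step requiring care is justifying that $\bar\vP_t$ is PSD, so that Cauchy--Schwarz applies. I would settle this by induction on $t$: start from $\vP_0=p_0\vI$, and note that $\bar\vP_t-\beta_t\vu_t\vu_t^\top$ is the Schur-complement form of a Gaussian posterior covariance, hence PSD. Everything else is direct substitution.
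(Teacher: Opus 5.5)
Your proposal is correct and follows essentially the same route as the paper. Like the paper, you sandwich the prediction step $\bar\vP_t=\vP_{t-1}+\ell^2\vI$ and the rank-one update with $\vk_B$, rewrite the result using the definition of $\rho_{\bar\vP_t}$, and bound $|\rho_{\bar\vP_t}|\le 1$ by Cauchy--Schwarz in the $\bar\vP_t$-seminorm. Your explicit treatment of the degenerate case $\bar\sigma_A^2\bar\sigma_B^2=0$ and your inductive justification that $\bar\vP_t$ is PSD fill in details the paper's proof leaves implicit.
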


\begin{proof}
	The prediction step gives
	$\bar{\vP}_t = \vP_{t-1} + \ell^2\vI$, so
	$\vk_B^\top\bar{\vP}_t\,\vk_B = \vk_B^\top\vP_{t-1}\,\vk_B + \ell^2$.
	The update~\eqref{eq:cov_update} with
	$\vu_t = \bar{\vP}_t\vk_A$ and
	$\beta_t = (r^2 + \vk_A^\top\vu_t)^{-1}$ gives
	$\vk_B^\top\vP_t\,\vk_B
		= \vk_B^\top\bar{\vP}_t\,\vk_B
		- \beta_t(\vk_B^\top\vu_t)^2$.
	Combining these two identities yields the first equality in
	\eqref{eq:crossover}, and the second is a rearrangement using the
	definition of $\rho_{\bar{\vP}_t}(\vk_A,\vk_B)$.  Because
	$\bar{\vP}_t$ is PSD, Cauchy--Schwarz in the seminorm induced by
	$\bar{\vP}_t$ gives
	$|\rho_{\bar{\vP}_t}(\vk_A,\vk_B)| \le 1$, proving
	\eqref{eq:crossover_bound}.
\end{proof}

\begin{corollary}[Repeated distractor-write fixed point at $\vk_A$]
	\label{cor:crossover_flood}
	If, from some time onward, every step writes the same unit-norm key $\vk_A$,
	then for $\ell^2 > 0$ the predicted variance along $\vk_A$ converges to
	the unique positive root $\bar\sigma_{A,\infty}^2$ of
	$x^2 - \ell^2 x - r^2\ell^2 = 0$.
	Moreover, for any unit-norm key $\vk_B$ with
	$\rho := \vk_A^\top\vk_B$,
	$\lim_{t\to\infty}\Delta_B^{(t)} = (1-\rho^2)\ell^2$.
	In particular, every distinct key pair ($|\rho|<1$) has a strictly
	positive steady-state increase along $\vk_B$.
\end{corollary}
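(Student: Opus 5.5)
The plan is to reduce everything to two low-dimensional recursions driven by the repeated write key $\vk_A$, with $\vA_t=\vI$ and constant $r,\ell$ as in \Cref{prop:crossover}. Write $x_t := \vk_A^\top\bar{\vP}_t\vk_A$ and $\vw_t := \bar{\vP}_t\vk_A$, so that $\vu_t=\vw_t$ and $\beta_t=(r^2+x_t)^{-1}$ at each step during the repeated distractor writes. Combining \eqref{eq:cov_update} with \eqref{eq:predicted_cov} gives
\[
\bar{\vP}_{t+1}=\bar{\vP}_t-\beta_t\vw_t\vw_t^\top+\ell^2\vI .
\]

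\textbf{Scalar recursion for $\vk_A$.} Contracting the update with $\vk_A$ on both sides gives a closed scalar recursion:
\[
x_{t+1}=f(x_t):=\frac{r^2x_t}{r^2+x_t}+\ell^2 .
\]
A fixed point satisfies $x(r^2+x)=r^2x+\ell^2(r^2+x)$, which is exactly $x^2-\ell^2x-r^2\ell^2=0$. This quadratic has one positive and one negative root, since the product of the roots is $-r^2\ell^2<0$.

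For convergence I will use that $f$ is increasing and maps $[0,\infty)$ into $[\ell^2,\infty)$. Its derivative is $f'(x)=r^4/(r^2+x)^2$, and on $[\ell^2,\infty)$ this is bounded by $q:=r^4/(r^2+\ell^2)^2<1$ because $\ell^2>0$. So after one step $f$ is a contraction on $[\ell^2,\infty)$. By Banach's theorem, $x_t\to\bar\sigma_{A,\infty}^2$ from any PSD starting covariance.

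\textbf{Vector recursion for $\vw_t$.} Applying the covariance recursion to $\vk_A$ gives
\[
\vw_{t+1}=\vw_t-\beta_t x_t\vw_t+\ell^2\vk_A=\frac{r^2}{r^2+x_t}\,\vw_t+\ell^2\vk_A .
\]
Decompose $\vw_t=x_t\vk_A+\vz_t$ with $\vz_t\perp\vk_A$. The orthogonal part then obeys $\vz_{t+1}=\tfrac{r^2}{r^2+x_t}\vz_t$. Since $x_t\ge\ell^2$ after the first step, its norm contracts by the factor $r^2/(r^2+\ell^2)<1$ at each step, so $\vz_t\to\mathbf 0$. Hence $\vw_t\to\bar\sigma_{A,\infty}^2\vk_A$, and in particular
\[
\vk_B^\top\bar{\vP}_t\vk_A=\vk_B^\top\vw_t\to\rho\,\bar\sigma_{A,\infty}^2 .
\]

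\textbf{Limit of $\Delta_B^{(t)}$.} Substitute into the first equality of \eqref{eq:crossover} from \Cref{prop:crossover}:
\[
\Delta_B^{(t)}\to\ell^2-\frac{\rho^2\bar\sigma_{A,\infty}^4}{r^2+\bar\sigma_{A,\infty}^2}.
\]
The fixed-point identity $\bar\sigma_{A,\infty}^4=\ell^2(r^2+\bar\sigma_{A,\infty}^2)$ collapses this to $(1-\rho^2)\ell^2$. This limit is strictly positive whenever $|\rho|<1$ and $\ell^2>0$.

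The main obstacle is the cross term. The full matrix $\bar{\vP}_t$ need not converge, because the components orthogonal to $\vk_A$ grow linearly. The approach avoids this by tracking only $\vw_t=\bar{\vP}_t\vk_A$, which does converge, and that is all \eqref{eq:crossover} requires. Continuity of the map $(x,\vw)\mapsto\ell^2-(\vk_B^\top\vw)^2/(r^2+x)$ then justifies passing to the limit.
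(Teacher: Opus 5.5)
Your proof is correct and follows the paper's argument essentially step for step. It uses the same decomposition $\bar{\vP}_t\vk_A = x_t\vk_A + \vz_t$ (the paper's $\vc_t$), the same scalar Riccati map $f$, the same geometric decay of the orthogonal cross-covariance by at most $r^2/(r^2+\ell^2)$, and the same fixed-point identity $\bar\sigma_{A,\infty}^4 = \ell^2(r^2+\bar\sigma_{A,\infty}^2)$ to collapse the limit, while proving convergence of $x_t$ via the contraction bound $f'(x)\le r^4/(r^2+\ell^2)^2<1$ on $[\ell^2,\infty)$ where the paper instead notes that $f(x)-x$ is strictly decreasing with a unique positive root, which makes that step explicit and gives a geometric rate.
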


\begin{proof}
	For repeated writes of the same key $\vk_A$, decompose
	\[
		\bar{\vP}_t\vk_A = \bar\sigma_{A,t}^2\,\vk_A + \vc_t,
		\qquad
		\bar\sigma_{A,t}^2 := \vk_A^\top \bar{\vP}_t \vk_A,
		\qquad
		\vc_t \perp \vk_A.
	\]
	Here
	\[
		\vc_t := (\vI - \vk_A\vk_A^\top)\bar{\vP}_t\vk_A
	\]
	collects the cross-covariances between $\vk_A$ and the orthogonal
	subspace, since $\vq^\top \vc_t = \vq^\top \bar{\vP}_t \vk_A$ for every
	$\vq \perp \vk_A$.  Using \eqref{eq:cov_update},
	\[
		\vP_t\vk_A
		= \bar{\vP}_t\vk_A
		- \frac{\bar{\vP}_t\vk_A\,\vk_A^\top\bar{\vP}_t\vk_A}
		{r^2 + \vk_A^\top\bar{\vP}_t\vk_A}
		= \frac{r^2}{r^2 + \bar\sigma_{A,t}^2}\,
		\bigl(\bar\sigma_{A,t}^2\,\vk_A + \vc_t\bigr).
	\]
	The next prediction adds $\ell^2\vI$, so only the $\vk_A$ component
	receives the extra $\ell^2$.  Therefore
	\begin{equation}
		\vc_{t+1}
		= \frac{r^2}{r^2 + \bar\sigma_{A,t}^2}\,\vc_t,
		\label{eq:crossover_coupling}
	\end{equation}
	and
	\begin{equation}
		\bar\sigma_{A,t+1}^2
		= \frac{r^2\,\bar\sigma_{A,t}^2}{r^2 + \bar\sigma_{A,t}^2} + \ell^2.
		\label{eq:crossover_sigma_recursion}
	\end{equation}
	The scalar map
	$f(x) = r^2x/(r^2+x) + \ell^2$ satisfies
	$f(x)-x = \ell^2 - x^2/(r^2+x)$, which is strictly decreasing on
	$x \ge 0$ and has the unique positive root
	\begin{equation}
		x^2 - \ell^2 x - r^2\ell^2 = 0.
	\end{equation}
	Hence \eqref{eq:crossover_sigma_recursion} converges to
	$\bar\sigma_{A,\infty}^2$.  Subtracting $\ell^2$ gives
	\begin{equation}
		\sigma_{A,\infty}^2
		\bigl(\sigma_{A,\infty}^2 + \ell^2\bigr)
		= r^2\ell^2,
	\end{equation}
	with $\sigma_{A,\infty}^2 := \bar\sigma_{A,\infty}^2 - \ell^2$.

	Because $\bar\sigma_{A,t}^2 \ge \ell^2$ for all sufficiently large
	$t$, \eqref{eq:crossover_coupling} contracts $\vc_t$ by at most
	$r^2/(r^2+\ell^2) < 1$, so $\vc_t \to 0$.  Thus for any unit-norm key $\vk_B$
	with $\rho = \vk_A^\top\vk_B$,
	\[
		\vk_B^\top\bar{\vP}_t\vk_A
		= \rho\,\bar\sigma_{A,t}^2 + \vk_B^\top \vc_t
		\;\longrightarrow\;
		\rho\,\bar\sigma_{A,\infty}^2.
	\]
	At the fixed point,
	\[
		\bar\sigma_{A,\infty}^2
		- \frac{r^2\,\bar\sigma_{A,\infty}^2}
		{r^2 + \bar\sigma_{A,\infty}^2}
		= \frac{(\bar\sigma_{A,\infty}^2)^2}
		{r^2 + \bar\sigma_{A,\infty}^2}
		= \ell^2,
	\]
	so the contraction term in \eqref{eq:crossover} converges to
	$\rho^2\ell^2$.  Therefore
	\begin{equation}
		\lim_{t\to\infty}\Delta_B^{(t)}
		= \ell^2 - \rho^2\ell^2
		= (1-\rho^2)\ell^2.
	\end{equation}
	This is strictly positive whenever $|\rho| < 1$.
\end{proof}

\begin{remark}[Predicted vs.\ posterior increments]
With $\vA_t = \vI$, unit keys satisfy
$\bar\sigma_{t+1}^2(\vk_B) = \sigma_t^2(\vk_B) + \ell^2$, so the
steady-state increment $(1-\rho^2)\ell^2$ of
\Cref{cor:crossover_flood} holds equally for the predicted variances
--- the form quoted in \Cref{eq:crossover_steady} --- and for the
posterior variances $\sigma_t^2$ plotted in \Cref{fig:collision}b.
\end{remark}

\subsection{Scalar-Gain Asymptotics for the Two-Key Collision}
\label{app:gain_envelopes}

The cross-key analysis of \Cref{prop:crossover,cor:crossover_flood}
yields closed-form predictions for the scalar-gain schedule observed in
\Cref{fig:appendix_write_gain_overlap} (left).

\paragraph{Basis decomposition of $\sigma_t^2(\vk_B)$ in the
deterministic-collision geometry.}
With the experimental keys $\vk_A = \ve_1$ and
$\vk_B = \rho\,\ve_1 + \sqrt{1-\rho^2}\,\ve_2$ of
\Cref{sec:deterministic_collision},
$\sigma_t^2(\vk_B) = \vk_B^\top \vP_t \vk_B$ expands as
\begin{equation}
  \sigma_t^2(\vk_B)
  = \rho^2\,\sigma_t^2(\ve_1)
  + 2\rho\sqrt{1-\rho^2}\,c_t^{(12)}
  + (1-\rho^2)\,\sigma_t^2(\ve_2),
  \qquad
  c_t^{(12)} := \ve_1^\top \vP_t \ve_2.
  \label{eq:variance_decomp_app}
\end{equation}
A long sequence of distractor writes at $A$ drives
$\sigma_t^2(\ve_1)\to\sigma_{A,\infty}^2
=\bar\sigma_{A,\infty}^2-\ell^2$ and
$c_t^{(12)}\to 0$, leaving the orthogonal $\ve_2$ term as the sole
source of late-time growth in $\sigma_t^2(\vk_B)$
(\Cref{fig:collision}b).

\paragraph{Target-write boundary gain.}
During a long target-write phase at $B$, the covariance asymptotically diagonalizes in
the basis $\{\vk_B,\ve_\perp\}$ with
$\vk_A = \rho \vk_B + \sqrt{1-\rho^2}\,\ve_\perp$,
$\bar\sigma_B^2 \to \bar\sigma_{B,\infty}^2$, and
$\bar\sigma_\perp^2$ growing by $\ell^2$ per step.  The scalar gain of the next
$A$-write is
\begin{equation}
	g_{A|B}
	:=
	\beta_A \vk_A^\top \vu_A
	=
	\frac{\rho^2 \bar\sigma_{B,\infty}^2
		+ (1-\rho^2)\bar\sigma_\perp^2}
	{r^2 + \rho^2 \bar\sigma_{B,\infty}^2
		+ (1-\rho^2)\bar\sigma_\perp^2},
	\label{eq:boost_gain_envelope}
\end{equation}
because
\[
	\vk_A^\top \vu_A
	=
	\rho^2 \bar\sigma_{B,\infty}^2
	+ (1-\rho^2)\bar\sigma_\perp^2.
\]
Its infinite target-write limit is
\begin{equation}
	g_{A|B}
	\longrightarrow
	1
	\qquad\text{as }\bar\sigma_\perp^2 \to \infty.
	\label{eq:boost_gain_ceiling}
\end{equation}
Thus the first distractor write at $A$ can approach unit scalar innovation weight
when the unresolved component of $\vk_A$ outside the repeatedly observed
$B$ direction dominates the predicted variance.

\paragraph{Distractor-write floor and steady-state gate.}
During sustained distractor writing at $A$, the scalar gain decays to
\begin{equation}
	g_A^{\mathrm{ss}}
	=
	\frac{\bar\sigma_{A,\infty}^2}
	{r^2 + \bar\sigma_{A,\infty}^2},
	\qquad
	\bar\sigma_{A,\infty}^2
	=
	\frac{\ell^2 + \sqrt{\ell^4 + 4r^2\ell^2}}{2},
	\label{eq:flood_gain_floor}
\end{equation}
because repeated $A$-writes align $\bar{\vP}_t\vk_A$ with $\vk_A$.
Here $g_A^{\mathrm{ss}}$ is the steady-state scalar gain on the innovation
in the error-correction form \eqref{eq:error_correction}.  Retrieval
still converges: the residual $(\vv_t - \hat\vv_t)$ shrinks toward zero
even when the gain converges to a positive floor.

At this fixed point the warped key collapses onto the raw key
($\vu_t \to \bar\sigma_{A,\infty}^2\vk_A$, since the cross-covariance
with orthogonal directions has decayed to zero), and the Bayesian gate
becomes $\vF^{\mathrm{ss}} = \vI - g^{\mathrm{ss}}\vk_A\vk_A^\top$
--- structurally the DeltaNet gate with $\gamma = g^{\mathrm{ss}}$.
In the limiting regimes:
\begin{itemize}[nosep]
	\item $\ell^2 \ll r^2$:\;
	      $g^{\mathrm{ss}} \approx \sqrt{\ell^2}/r$.
	      Small state noise yields low gain --- the filter trusts its
	      accumulated evidence.
	\item $\ell^2 \gg r^2$:\;
	      $g^{\mathrm{ss}} \approx 1 - r^2/(2\ell^2)$.
	      Large state noise yields near-full replacement at each step,
	      recovering constant-gain dynamics.
\end{itemize}
The ratio $\ell^2/r^2$ thus controls the filter's effective memory
horizon: small $\ell^2/r^2$ produces long memory and strong self-limiting
behavior, while large $\ell^2/r^2$ shortens memory toward the
history-free updates of the constant-gain models in
Table~\ref{tab:connections}.

\paragraph{Numerical verification.}
For the parameters of Figure~\ref{fig:collision}
($\rho = 0.92$, $r^2=\ell^2=0.05$), these formulas give the
distractor-write fixed point $g_A^{\mathrm{ss}} \approx 0.62$ and the
infinite target-write limit $g_{A|B}\to 1$; after the plotted 40 target
writes, the first distractor-write value is $g_{A|B} \approx 0.91$.
These match the onset peak and late-time level of the gain trace in
\Cref{fig:appendix_write_gain_overlap} (left).

\paragraph{Angular rotation of the warped key.}
The same basis decomposition determines the direction of the next
$A$-write, not just its scalar gain.  In the $(\vk_B, \ve_\perp)$
basis with $\bar\vP_t \approx \bar\sigma_{B,\infty}^2\,\vk_B \vk_B^\top
+ \bar\sigma_\perp^2\,\ve_\perp \ve_\perp^\top$, the warped key
\begin{equation}
  \vu_t
  = \bar\vP_t \vk_A
  = \rho\,\bar\sigma_{B,\infty}^2\,\vk_B
  + \sqrt{1-\rho^2}\,\bar\sigma_\perp^2\,\ve_\perp
  \label{eq:warped_key_decomp}
\end{equation}
makes angle $\theta_t$ with $\vk_A$ given by
\begin{equation}
  \cos\theta_t
  = \frac{\vk_A^\top \vu_t}{\|\vu_t\|}
  = \frac{\rho^2 \bar\sigma_{B,\infty}^2 + (1-\rho^2)\bar\sigma_\perp^2}
         {\sqrt{\rho^2 (\bar\sigma_{B,\infty}^2)^2
              + (1-\rho^2) (\bar\sigma_\perp^2)^2}}.
  \label{eq:warped_key_cos}
\end{equation}
In the long target-write limit $\bar\sigma_\perp^2 / \bar\sigma_{B,\infty}^2
\to \infty$, both numerator and denominator are dominated by the
$\bar\sigma_\perp^2$ term, giving
\begin{equation}
  \cos\theta_\infty = \sqrt{1-\rho^2},
  \qquad
  \theta_\infty = \arcsin\rho.
  \label{eq:warped_key_angle}
\end{equation}
The next $A$-write thus enters with its correction direction rotated
toward the unresolved residual $\ve_\perp$, asymptotically orthogonal
to $\vk_A$ in the $\rho \to 1$ limit.  No constant-gain or
covariance-reset recurrence reproduces this rotation: in those
families $\vu_t \propto \vk_t$ identically, so $\theta_t \equiv 0$.

\subsection{Exact Panel-(a) Limits for the Collision Experiment}
\label{app:panel_a_limits}

This subsection converts the mean recursions into exact predictions for
panel~(a) of Figure~\ref{fig:collision}.  Throughout, we use the
\emph{actual deterministic state before distractor writing} produced by
the seed and target-write phases of Figure~\ref{fig:collision}; no
idealized infinite target-write limit is taken. Let $t_0$ denote the last
target-write step at $B$, and
for $n \ge 0$ define the query readouts
\[
	a_n := \vM_{t_0+n}^\top \vk_A,
	\qquad
	b_n := \vM_{t_0+n}^\top \vk_B,
\]
so $a_0$ and $b_0$ are the exact deterministic readouts before distractor writing. The
quantity plotted in panel~(a) is
\begin{equation}
	p_n^{AB}
	:=
	\frac{\exp(b_n(B))}{\exp(b_n(A)) + \exp(b_n(B))}.
	\label{eq:panel_a_pairwise_prob}
\end{equation}

\paragraph{Same-key readout contraction.}
For repeated writes along a fixed unit-norm key $\vk$ with
$\vA_t = \vI$, the mean update \eqref{eq:mean_update} can be written
in innovation form
$\vM_t = \bar\vM_t + \beta_t \vu_t (\vv_t - \hat\vv_t)^\top$ with
$\hat\vv_t := \bar\vM_t^\top \vk_t$; projecting onto $\vk$ gives
$\vM_t^\top \vk - \vv = (1 - g_t)\bigl(\bar\vM_t^\top \vk - \vv\bigr)$,
which iterates to the scalar contraction
\begin{equation}
  \vM_t^\top \vk - \vv
  = \Bigl(\prod_{s=1}^{t} (1 - g_s)\Bigr)\bigl(\vM_0^\top \vk - \vv\bigr).
  \label{eq:same_key_readout_recursion}
\end{equation}
The readout converges to $\vv$ whenever $\sum_s g_s = \infty$.

\paragraph{Target-write plateau.}
During target writing at $B$, \eqref{eq:same_key_readout_recursion} applies with
$\vk=\vk_B$ and $\vv=\vv_B$.  For both the Bayesian Layer and the
covariance-reset / DeltaNet-style approximation, the gain sequence
satisfies $\sum_t g_t = \infty$, so
$\vM_t^\top \vk_B \to \vv_B = e_B$.  Therefore
\[
	\bigl(\vM_t^\top \vk_B\bigr)_B
	- \bigl(\vM_t^\top \vk_B\bigr)_A
	\to 1,
	\qquad
	p_t^{AB}
	\to (1+e^{-1})^{-1}
	\approx 0.731.
\]

\paragraph{DeltaNet-style distractor-write plateau from the exact boundary state.}
During distractor writing at $A$, the covariance-reset / DeltaNet-style update is
\[
	\vM_{t_0+n}
	=
	(\vI-\gamma\vk_A\vk_A^\top)\vM_{t_0+n-1}
	+ \gamma \vk_A \vv_A^\top.
\]
Projecting onto $\vk_A$ and $\vk_B$ gives, for $n\ge1$,
\begin{align}
	a_n & = (1-\gamma)\,a_{n-1} + \gamma\,\vv_A,   \\
	b_n & = b_{n-1} + \gamma\rho\,(\vv_A-a_{n-1}),
	\qquad \rho := \vk_A^\top \vk_B.
\end{align}
Hence
\begin{align}
	a_n & = \vv_A + (1-\gamma)^n(a_0-\vv_A),                 \\
	b_n & = b_0 - \rho\bigl(1-(1-\gamma)^n\bigr)(a_0-\vv_A),
\end{align}
so the exact distractor-write plateau from the actual boundary state is
\begin{equation}
	b_\infty = b_0 - \rho(a_0-\vv_A).
	\label{eq:delta_exact_flood_limit}
\end{equation}
For the deterministic comparison of Figure~\ref{fig:collision}, the seed
and target-write phases produce
\[
	a_0(A)\approx 0.13145,
	\quad
	a_0(B)\approx 0.88467,
	\quad
	b_0(A)=0,
	\quad
	b_0(B)=1,
\]
hence
\[
	b_\infty(A)\approx 0.79907,
	\qquad
	b_\infty(B)\approx 0.18610.
\]
Therefore the exact DeltaNet-style panel-(a) plateau is
\[
	b_\infty(B)-b_\infty(A)\approx -0.61296,
	\qquad
	p_\infty^{AB}\approx 0.35138.
\]

\paragraph{Bayesian distractor-write plateau from the exact boundary state.}
During distractor writing at $A$, the Bayesian mean update gives
\[
	\vM_{t_0+n}
	=
	(\vI-\beta_n \vu_n \vk_A^\top)\vM_{t_0+n-1}
	+ \beta_n \vu_n \vv_A^\top,
\]
with $\vu_n = \bar{\vP}_{t_0+n}\vk_A$.  Define the two scalar gain
coefficients
\[
	\eta_n^A := \beta_n \vk_A^\top \vu_n,
	\qquad
	\eta_n^B := \beta_n \vk_B^\top \vu_n.
\]
Projecting the mean update onto $\vk_A$ and $\vk_B$ yields, for $n\ge1$,
\begin{align}
	a_n & = (1-\eta_n^A)\,a_{n-1} + \eta_n^A\,\vv_A, \\
	b_n & = b_{n-1} + \eta_n^B\,(\vv_A-a_{n-1}).
\end{align}
Iterating gives the exact deterministic series
\begin{align}
	a_n - \vv_A
	 & = \Bigl(\prod_{j=1}^n (1-\eta_j^A)\Bigr)(a_0-\vv_A), \\
	b_n
	 & = b_0
	+ \sum_{s=1}^n
	\eta_s^B
	\Bigl(\prod_{j=1}^{s-1}(1-\eta_j^A)\Bigr)
	(\vv_A-a_0).
\end{align}
Since $\sum_n \eta_n^A = \infty$ in the repeated-$A$ regime,
$a_n \to \vv_A$, and the distractor-write plateau is the convergent limit
\begin{equation}
	b_\infty
	=
	b_0
	+ \sum_{s=1}^{\infty}
	\eta_s^B
	\Bigl(\prod_{j=1}^{s-1}(1-\eta_j^A)\Bigr)
	(\vv_A-a_0).
	\label{eq:bayes_exact_flood_limit}
\end{equation}
Evaluating \eqref{eq:bayes_exact_flood_limit} from the exact
deterministic boundary state of Figure~\ref{fig:collision},
\[
	a_0(A)\approx 0.90019,
	\quad
	a_0(B)\approx 0.10271,
	\quad
	b_0(A)=0,
	\quad
	b_0(B)=1,
\]
gives
\[
	b_\infty(A)\approx 0.01978,
	\qquad
	b_\infty(B)\approx 0.97964,
\]
and therefore
\[
	b_\infty(B)-b_\infty(A)\approx 0.95986,
	\qquad
	p_\infty^{AB}\approx 0.72309.
\]

\paragraph{Linear Attention.}
Under repeated $A$-writes, Linear Attention adds the rank-one increment
$\gamma \vk_A \vv_A^\top$ at every step, so
$\vM_t^\top \vk_B$ acquires an $A$-logit contribution that grows
linearly in the number of distractor writes while the $B$-logit does not.
Thus $b_n(A)\to\infty$, $b_n(B)-b_n(A)\to-\infty$, and
$p_n^{AB}\to 0$.

\subsection{Deterministic experiment: full specification}
\label{app:collision_details}
\label{app:experiments}

\subsubsection{Hyperparameters}

\begin{center}
	\small
	\renewcommand{\arraystretch}{1.15}
	\begin{tabular}{@{}ll@{}}
		\toprule
		\textbf{Parameter}                   & \textbf{Value} \\
		\midrule
		Address dimension $D$                & 16             \\
		Number of identities $m$             & 6              \\
		Seed repetitions $n_{\mathrm{seed}}$ & 2              \\
		Target writes (identity $B$), $n_b$  & 40             \\
		Distractor writes (identity $A$), $n_f$ & 60          \\
		Total sequence length $T$            & 112            \\
		Prior covariance $\vP_0$             & $3.0\,\vI$     \\
		State-noise variance $\ell^2$        & 0.05           \\
		Observation-noise variance $r^2$     & 0.05           \\
		Dynamics $\vA_t$                     & $\vI$          \\
		Overlap (high) $\rho$                & 0.92           \\
		Overlap (low, control) $\rho$        & 0.45           \\
		\bottomrule
	\end{tabular}
\end{center}

\paragraph{Sequence layout.}
Token ranges are 1--12 (seed, $n_{\mathrm{seed}}=2$ passes through
$A,B,C,D,E,F$), 13--52 (target writes at $B$), and 53--112
(distractor writes at $A$).
Keys for the four non-collision identities are
$\bar\vk_C=\ve_3,\bar\vk_D=\ve_4,\bar\vk_E=\ve_5,\bar\vk_F=\ve_6$;
$\bar\vk_A,\bar\vk_B$ follow \Cref{sec:controlled_collision}.
Values are one-hot, $\vv_i=\ve_i\in\reals^6$.

\paragraph{Update rules.}
Both models use the scalar observation-noise variant ($\vR=r^2\vI$,
\Cref{app:per_column_gating}) of the Bayesian Layer update from
\Cref{prop:update}, with $\vA_t=\vI$ and the hyperparameters above
($\vM_0=\mathbf{0}$).  The covariance-reset ablation clamps
$\bar\vP_t=\ell^2\vI$ at every step, giving constant gain
$\gamma=\ell^2/(r^2+\ell^2)=0.5$ on unit-norm keys.

\subsubsection{Gain-matched fixed-prior control}
\label{app:gain_matched}

We sweep $\bar\vP_t = c\,\vI$ over
$c \in \{0.01, 0.05, 0.10, 0.50, 1.0, 3.0\}$, varying the constant
gain $\gamma(c) = c/(r^2 + c)$ from $0.17$ to $0.98$.

\begin{center}
	\small
	\renewcommand{\arraystretch}{1.15}
	\begin{tabular}{@{}cccc@{}}
		\toprule
		$c$                     & $\gamma(c)$     & Margin at $t\!=\!112$ & Zero-crossing step \\
		\midrule
		0.01                    & 0.17            & $-0.35$               & 57                 \\
		0.05                    & 0.50            & $-0.30$               & 54                 \\
		0.10                    & 0.67            & $-0.27$               & 53                 \\
		0.50                    & 0.91            & $-0.21$               & 53                 \\
		1.00                    & 0.95            & $-0.20$               & 53                 \\
		3.00                    & 0.98            & $-0.19$               & 53                 \\
		\midrule
		\textbf{Bayesian Layer} & \textbf{varies} & $\mathbf{+0.45}$      & \textbf{never}     \\
		\bottomrule
	\end{tabular}
\end{center}

No fixed $c$ recovers the Bayesian Layer's behavior.  A constant gain
cannot simultaneously allow fast learning during target writing and slow
overwriting during distractor writing.

\subsubsection{Overlap sweep}
\label{app:overlap_sweep}

\begin{center}
	\small
	\renewcommand{\arraystretch}{1.15}
	\begin{tabular}{@{}ccc@{}}
		\toprule
		$\rho$ & Margin (Bayesian Layer) & Margin (Reset) \\
		\midrule
		0.30   & $+0.46$                 & $+0.40$        \\
		0.45   & $+0.46$                 & $+0.32$        \\
		0.60   & $+0.46$                 & $+0.20$        \\
		0.75   & $+0.46$                 & $+0.01$        \\
		0.85   & $+0.46$                 & $-0.16$        \\
		0.90   & $+0.45$                 & $-0.26$        \\
		0.92   & $+0.45$                 & $-0.30$        \\
		0.95   & $+0.43$                 & $-0.36$        \\
		0.98   & $+0.34$                 & $-0.42$        \\
		\bottomrule
	\end{tabular}
\end{center}

\subsubsection{Key numerical values}

\begin{center}
	\small
	\renewcommand{\arraystretch}{1.15}
	\begin{tabular}{@{}lcc@{}}
		\toprule
		\textbf{Quantity}                        & \textbf{Bayesian Layer} & \textbf{Reset}  \\
		\midrule
		$A$-write scalar gain $g_t$ at seed ($t = 1$)         & $0.98$    & 0.50            \\
		$A$-write scalar gain at distractor-write start ($t = 53$) & $0.91$ & 0.50          \\
		$A$-write scalar gain at distractor-write end ($t = 112$)  & $0.62$ & 0.50          \\
		$B$-margin after target writing ($t = 52$)    & $+0.46$                 & $+0.46$         \\
		$B$-margin after distractor writing ($t = 112$) & $+0.45$ ($> 0$)       & $-0.30$ ($< 0$) \\
		\bottomrule
	\end{tabular}
\end{center}

All quantities are deterministic and exactly reproducible from the
hyperparameters above.

\begin{figure}[!htbp]
  \centering
  \includegraphics[width=\linewidth]{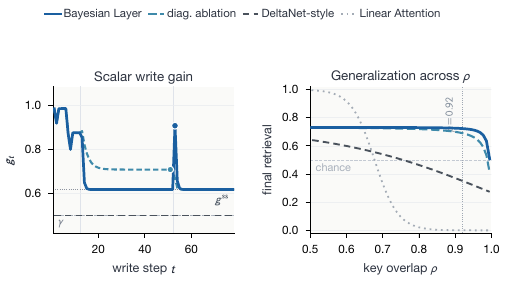}
  \caption{%
    \textbf{Deterministic write-dynamics diagnostics.}
    \textbf{(left)}~Scalar write gain
    \(g_t = \beta_t\vk_t^\top\vu_t\) over time for the
    deterministic experiment of \Cref{sec:controlled_collision}~(a, b),
    verifying the closed-form gain envelope of
    \Cref{eq:flood_gain_floor}: a spike at the target-write\(\to\)distractor-write
    transition followed by decay to the Riccati floor
    \(g^{\mathrm{ss}}\!\approx\!0.62\).
    \textbf{(right)}~Deterministic overlap sweep---final retrieval at
    \(\bar\vk_B\) as a function of \(\rho\) at fixed write schedule.
    The Bayesian Layer holds across the full collision range; reset
    and constant-gain models collapse as \(\rho\!\to\!1\).  The plotted
    retrieval $p$ corresponds to the margins in \Cref{app:overlap_sweep}
    via $\Delta = 2p - 1$.
  }
  \label{fig:appendix_write_gain_overlap}
\end{figure}

\subsection{Learned-recall experiment: full specification}
\label{app:synthetic_train_test_task}

\subsubsection{Sequence construction}
\label{app:episode_construction}

Each training example is a single associative-recall sequence with
controlled address collision.  The experiment is designed to isolate
overwrite under known key geometry, so write events explicitly provide
both an address and a value, and query events provide only an address.

\paragraph{Pairs, addresses, and collision geometry.}
We use $K = 8$ target--distractor pairs and $D = 16$ address
dimensions.  For pair $i$,
\[
	\bar\vk_{B_i} = \ve_{2i-1}, \qquad
	\bar\vk_{A_i} = \rho_i\,\ve_{2i-1} + \sqrt{1-\rho_i^2}\,\ve_{2i}.
\]
Thus each pair occupies its own two-dimensional subspace and satisfies
$\bar\vk_{A_i}^{\top}\bar\vk_{B_i} = \rho_i$, while different pairs are
orthogonal by construction.

\paragraph{Per-sequence value assignment.}
A fresh permutation
$\pi:\{1,\ldots,2K\}\to\{1,\ldots,2K\}$ assigns one-hot answer labels:
\[
	\vv_{B_i} = \ve_{\pi(2i-1)}, \qquad
	\vv_{A_i} = \ve_{\pi(2i)}.
\]
Because this permutation is re-sampled independently for every
sequence, the model cannot solve the task by memorizing a global
address-to-label mapping.

\paragraph{Token format.}
A write token for identity $j$ is
\[
	[\texttt{WRITE},\,\bar\vk_j,\,\vv_j] \in \reals^{1 + D + 2K}.
\]
A query token for target $B_i$ is
\[
	[\texttt{QUERY},\,\bar\vk_{B_i},\,\mathbf{0}] \in \reals^{1 + D + 2K}.
\]
Thus the full token dimension is $1 + D + 2K = 33$.  Loss is applied
only at query positions.

\paragraph{Sequence phases.}
Each sequence contains four phases:
\begin{enumerate}[nosep]
	\item \textbf{Seed} ($2K$ tokens): one write per identity, in random
	      order, re-sampled independently for every sequence.
	\item \textbf{Target writes} ($K \cdot n_b$ tokens): target
	      writes grouped by identity, with each target written
	      $n_b$ times contiguously in the fixed order
	      $B_1^{n_b}, \ldots, B_K^{n_b}$.
	\item \textbf{Distractor writes} ($K \cdot n_f$ tokens):
	      distractor writes grouped by identity, with each distractor
	      written $n_f$ times contiguously in the fixed
	      order $A_1^{n_f}, \ldots, A_K^{n_f}$.
	\item \textbf{Query} ($K$ tokens): one query per target, in random
	      order.
\end{enumerate}

\paragraph{Train / test configuration.}
Unless otherwise stated, we fix $K = 8$ and
$n_b = 4$.
Training uses
$n_f \in \{1,2,4,8\}$ and
$\rho_i \sim \mathrm{Uniform}(0.60, 0.80)$.
The distractor-write extrapolation test uses
$n_f \in \{16,32,64\}$.
The overlap-shift test uses
$\rho_i \sim \mathrm{Uniform}(0.85, 0.95)$.

\subsubsection{Shared backbone and recurrent mixer interface}
\label{app:compared_models}

All compared models share the same backbone, embedding size, optimizer,
training budget, and output head, differing only in the recurrent
mixer.  The backbone is 2 pre-norm RMSNorm layers with
$d_{\mathrm{model}}=64$, 4 heads of width $d_k=D=16$, and a SwiGLU MLP
of hidden dimension 128 per layer; each layer is a residual mixer
followed by a residual MLP.  No positional embeddings are used: the
task carries its own address and \texttt{type} structure.

The crucial design choice is to separate the \emph{learned} residual
stream from a \emph{fixed} address pathway.  Each raw token
$\vx_t = [\texttt{type}_t,\bar\vk_t,\vv_t]\in\reals^{33}$ is mapped to
the residual stream by a learned linear embedding
$W_{\mathrm{in}}:\reals^{33}\to\reals^{64}$, but $\bar\vk_t$ is also
passed unchanged to every layer and head and used directly as both
key and query: $\vk_t^{(\ell,h)}=\vq_t^{(\ell,h)}=\bar\vk_t$.  Thus
the overlap $\rho_i$ is the actual overlap seen by the update, not a
property of the raw token before a learned remapping.  Values are
computed from the normalized hidden state,
$\vv_t^{(\ell,h)} = W_V^{(\ell,h)}\,\mathrm{RMSNorm}(\vh_t^{(\ell-1)})$,
$W_V^{(\ell,h)}\in\reals^{16\times 64}$; gating and dynamics parameters
(when present) are produced from the same normalized state.  Each
head maintains $\vM_t^{(\ell,h)}\in\reals^{16\times 16}$, plus
$\vP_t^{(\ell,h)}\in\reals^{16\times 16}$ for the Bayesian Layer; at
query tokens each head reads
$\vo_t^{(\ell,h)} = (\vM_t^{(\ell,h)})^\top\bar\vk_t$, and head outputs
are concatenated, projected, and mapped to logits over the $2K$
answer labels.

\subsubsection{Compared models and update rules}

\begin{table}[ht]
	\centering
	\footnotesize
	\caption{\textbf{Compared recurrent updates in the controlled-recall experiment.}
		In all rows $\beta_t$ denotes a scalar write strength, but its
		functional form differs: for DeltaNet-style,
		$\beta_t = \sigma(\vw_b^\top\tilde\vh_t + b_b) \in (0,1)$ is a
		learned sigmoid; for the covariance-reset, diagonal-covariance, and Bayesian Layer,
		$\beta_t = (r^2 + \bar\vk_t^\top\vu_t)^{-1}$ is the inverse
		innovation variance from the filtering equations.}
	\label{tab:exp2_models}
	\renewcommand{\arraystretch}{1.25}
	\resizebox{\linewidth}{!}{%
	\begin{tabular}{@{}lcccl@{}}
		\toprule
		\textbf{Model}
		 & \textbf{Gate}
		 & \textbf{Write address}
		 & $\vP_t$
		 & \textbf{Interpretation}                                       \\
		\midrule
		Linear Attention
		 & none
		 & $\bar\vk_t$
		 & ---
		 & additive write                                                \\

		GLA-style
		 & $\diag(\boldsymbol{\alpha}_t)$
		 & $\bar\vk_t$
		 & ---
		 & diagonal gate, raw address                                    \\

		DeltaNet-style
		 & $\vI - \beta_t \bar\vk_t \bar\vk_t^\top$
		 & $\bar\vk_t$
		 & reset
		 & rank-one gate, raw address                                    \\

		Covariance-reset
		 & $\vI - \beta_t \vu_t \bar\vk_t^\top$
		 & $\vu_t = \ell^2 \bar\vk_t$
		 & reset
		 & rank-one gate, fixed isotropic covariance                     \\

		Diagonal covariance
		 & $\vI - \beta_t \vu_t \bar\vk_t^\top$
		 & $\vu_t = \bar\vP_t^{\mathrm{diag}}\bar\vk_t$
		 & diagonal
		 & propagated diagonal; diagnostic and random-key comparisons    \\

		\textbf{Bayesian Layer}
		 & $\vI - \beta_t \vu_t \bar\vk_t^\top$
		 & $\vu_t = \bar\vP_t \bar\vk_t$
		 & \textbf{propagated}
		 & full update with propagated covariance and learned $\ell_t^2$ \\
		\bottomrule
	\end{tabular}
	}
\end{table}

All models use the same explicit address stream $\bar\vk_t$ for both
writing and querying; they differ only in how memory is updated.  The
update equations are summarized in \Cref{tab:exp2_models}; below we
specify only the model-specific learned parameters.  Throughout,
$\tilde\vh_t = \mathrm{RMSNorm}(\vh_t)$.

\paragraph{Gates.}
GLA-style uses a learned diagonal gate
$\boldsymbol{\alpha}_t = \sigma(W_g \tilde\vh_t + \vb_g) \in (0,1)^{16}$.
DeltaNet-style uses a learned scalar
$\beta_t = \sigma(\vw_b^\top \tilde\vh_t + b_b) \in (0,1)$; its update
is algebraically equivalent to the error-correction form
of~\citet{schlag2021linear}, writing the retrieval error
$\vv_t - \vM_{t-1}^\top\bar\vk_t$ rather than the raw value.

\paragraph{Covariance-reset.}
Same mean update as the Bayesian Layer but with the predicted covariance
clamped to $\bar\vP_t = \ell^2\vI$ at every step.  No covariance state
is propagated.

\paragraph{Diagonal covariance.}
The deterministic diagnostics and random-key evaluation propagate only
$\vP_t^{\mathrm{diag}}$, obtained by setting every off-diagonal entry of
$\vP_t$ to zero after each update. The write address is
$\vu_t=\bar\vP_t^{\mathrm{diag}}\bar\vk_t$. This removes cross-coordinate uncertainty
while preserving per-coordinate variance, between covariance reset and
dense propagation.

\paragraph{Bayesian Layer.}
Scalar observation noise ($\vR = r^2\vI$, $G=1$; \Cref{app:per_column_gating})
and $\vA_t = \vI$, matching all other models in the comparison.  For the
learned Experiment~2 comparison we use the input-dependent process-noise
extension from \Cref{sec:framework}: each head predicts a positive scalar
$\ell_t^2 = \ell^2(\tilde\vh_t)$, while $r^2$ and $p_0$ are held fixed.
The covariance is reset to $p_0 \vI$ at the start of each new sequence.

\subsubsection{Random-key evaluation}
\label{app:random_keys}

To test whether the collision mechanism generalizes beyond engineered
pair structure, we run a second evaluation with random addresses.

\paragraph{Setup.}
Raw addresses are sampled from $\mathcal{N}(\mathbf{0}, \vI)$ in
$\reals^{64}$ and normalized to unit norm.  Values are one-hot labels
from $\{1,\ldots,64\}$, re-sampled independently for every sequence.
Each sequence contains $n$ write tokens followed by $q = 8$ query
tokens. There are no repeated target-write or distractor-write phases and no engineered
target--distractor pairing.  The $q = 8$ query tokens are addresses
drawn uniformly at random (without replacement) from the $n$ written
keys; accuracy is the fraction of these queries for which the correct
value label is returned.

\paragraph{Token format.}
Same type-flag structure as the engineered-collision condition, with
raw addresses $\bar\vk_t \in \reals^{64}$ and one-hot value labels in
$\reals^{64}$.  As before, the address pathway bypasses
$W_{\mathrm{in}}$ and feeds the shared projection $R$ directly.

\paragraph{Shared address projection.}
Because the recurrent heads use 16-dimensional addresses, we project
the raw 64-dimensional address to mixer space with a single shared
frozen random projection matrix
$R \in \reals^{16 \times 64}$ with orthonormal rows, and renormalize:
\[
	\tilde\vk_t = \frac{R \bar\vk_t}{\|R \bar\vk_t\|}.
\]
All models then use $\tilde\vk_t$ as their write and query address.
This preserves the principle of a shared externally specified address
stream while matching the head dimension of the learned model.  All
reported collision statistics in this condition are measured after
projection, i.e.\ in the actual address space seen by the recurrent
update.

\paragraph{Train / test.}
With $D = 16$ (the mixer head dimension), the occupancy ratio is
$n/D$.  Training uses $n \in \{16, 32, 48, 64\}$
($n/D \leq 4$).  Test uses $n \in \{96, 128, 192, 256\}$
($n/D \in \{6, 8, 12, 16\}$, entirely outside the training range).
Optimizer, batch size, and training steps match the
engineered-collision condition; models are trained separately on the
random-key task.

\paragraph{Memory capacity and the role of $n/D$.}
The bilinear readout $\hat\vv_i = \vM^\top \vk_i$ shared by all five
models imposes a capacity ceiling independent of the update rule.

\begin{remark}[Capacity bound for bilinear readout]
	\label{rem:capacity}
	Let $\vM \in \reals^{D \times m}$ and consider perfect recall of $n$
	associations $\{(\vk_i, \vv_i)\}_{i=1}^{n}$ under
	$\hat\vv_i = \vM^\top \vk_i$.  This requires $\vK^\top \vM = \vV^\top$
	with $\vK \in \reals^{D \times n}$ and
	$\vV \in \reals^{n \times m}$, a linear system in $D$ unknowns per
	column.  When $n \leq D$ and $\rank(\vK) = n$ the system is
	underdetermined and exactly solvable for every $\vV$; when $n > D$
	it is overdetermined and almost surely unsolvable for independently
	sampled $\vV$.
\end{remark}

At $D = 16$, then, the random-key axis does not test whether perfect
recall is achievable at $n/D > 1$ --- it is not --- but whether
propagating $\vP_t$ slows the degradation curve relative to
history-free alternatives.

\begin{figure}[!htbp]
  \centering
  \includegraphics[width=\linewidth]{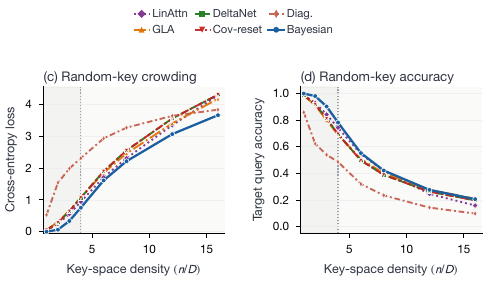}
  \caption{%
    \textbf{Random-key crowding.}
    Cross-entropy at queries vs.\ key-space density \(n/D\) under
    the random-key task above (no engineered target--distractor
    pairs).  This stresses the cross-direction prediction
    \((1-\rho^2)\ell^2\) of \Cref{eq:crossover_steady}: a write at
    one address replenishes other addresses' uncertainty at this
    rate, which becomes a crowding cost as \(n/D\) grows.  The
    Bayesian Layer retains the lowest loss across all densities,
    while the diagonal-covariance ablation degrades most sharply as
    density increases. This separates dense propagation from both
    diagonal propagation and covariance reset. All models
    eventually saturate above the capacity bound of
    \Cref{rem:capacity} (\(n/D > 1\)), but the propagated-covariance
    update slows the degradation curve.
  }
  \label{fig:appendix_random_key_crowding}
\end{figure}

\subsubsection{Training and metrics}
\label{app:ablations}

\paragraph{Optimization.}
We use cross-entropy loss at query positions only.
Optimization uses AdamW with learning rate
$3 \times 10^{-4}$, $\beta_1 = 0.9$, $\beta_2 = 0.999$,
weight decay $10^{-4}$, and gradient clipping at norm $1.0$.
Batch size is 256, and models are trained for $2{,}500$ steps.
For the Bayesian Layer in Experiment~2, $r^2 = 0.05$ and
$\vP_0 = 3 \vI$ are fixed, while the process-noise scale is learned as
an input-dependent scalar $\ell_t^2(\tilde\vh_t)$; the
covariance-reset ablation uses the corresponding fixed isotropic
choice $\bar\vP_t = \ell^2 \vI$ with $\ell^2 = 0.05$.  All reported
results are averaged over 3 random seeds; error bars show
$\pm 1$ standard deviation.

\paragraph{Metrics.}
Engineered-collision condition: with readouts $\vy(\vq_{B_i})$ at the
target-$B_i$ query, target query accuracy is
$\frac{1}{K}\sum_i \mathbf{1}[\arg\max_j \vy(\vq_{B_i})_j = \pi(2i{-}1)]$
and target probability margin is the average
$p(\pi(2i{-}1)\mid\vq_{B_i}) - p(\pi(2i)\mid\vq_{B_i})$, with
$p(j\mid\vq)=\mathrm{softmax}(\vy(\vq))_j$; chance is
$1/(2K)=6.25\%$.  Random-key condition: query accuracy averages
$\mathbf{1}[\arg\max_\ell \vy(\tilde\vk_{q_j})_\ell = c_{q_j}]$
over the eight queries; chance is $1/64\approx 1.6\%$, so
absolute accuracies across the two conditions are not directly
comparable.  The probability margin is not defined in the random-key
case (no paired distractor).

\section{Zoology MQAR Benchmark}
\label{app:zoology_details}

This section gives the complete specification of the Zoology MQAR evaluation in \Cref{sec:zoology_mqar}.

\paragraph{Experiment details.}
Unless stated otherwise, the MQAR runs use the following settings:

\emph{Optimizer and schedule}. We use AdamW with
$\beta=(0.9,0.95)$, $\epsilon=10^{-8}$, and weight decay $0.1$;
bias and normalization parameters are excluded from weight decay. Each
run uses batch size 256 for 40 epochs, with 1,024 warmup steps followed
by cosine decay to $0.1\times$ the peak learning rate. Peak learning
rates are selected by validation query accuracy from the task-specific
grids in \Cref{tab:consolidated_settings}.

\emph{Model}. We compare against five baselines: Attention~\citep{vaswani2017attention}, Gated DeltaNet~\citep{yang2025gated}, Mamba-2~\citep{dao2024transformers}, RWKV-7~\citep{peng2025rwkv7}, and Hyena~\citep{poli2023hyena}. All models use two sequence-mixer layers, an external linear categorical readout, and width $d_{\mathrm{model}}=128$. For head-based mixers, we choose the number of heads so that the effective head dimension is $32$; Mamba-2 uses the same SSD head dimension. Model state size is matched across models at $8192$ parameters per layer, including all recurrent state and gating parameters.  The exact head and state settings for each model are listed in \Cref{tab:mqar_model_configs}.
To isolate the effects of gating and local convolutional context, we also run Gated DeltaNet with and without its short convolution and Mamba-2 with its default convolution or with $d_{\mathrm{conv}}=1$.
\begin{table}[t]
	\centering
	\footnotesize
	\setlength{\tabcolsep}{4.2pt}
	\caption{Model configurations}
	\label{tab:mqar_model_configs}
	\resizebox{\linewidth}{!}{%
		\begin{tabular}{lllll}
			\toprule
			\textbf{Model}  & $d_{state}$ 	& $d_{model}$				& \textbf{State Eqn} (per layer)								& \textbf{State Size} \\
			\midrule
			\multicolumn{5}{l}{$d_{model} = 128$, batch size $= 256$} \\
			\midrule
			Bayesian Layer 	& $128$			& $d_{state}=32$ 			& $n_{head} \times d_{state}^2 + n_{head} \times d_{state}^2$ 	& $8192$ \\
			Gated DeltaNet	& $128$			& $d_{state}=32$; $e_v=2$ 	& $n_{head} \times e_v \times d_{state}^2$ 						& $8192$ \\
			Mamba-2 		& $128$			& $d_{state}=32$; $e=2$; 	& $n_{head} \times e \times d_{state} \times d_{state}$ 		& $8192$ \\
			RWKV-7 			& $128$			& $d_{state}=32$ 			& $n_{head} \times d_{state}^2 + n_{head} \times d_{state}^2$ 	& $8192$ \\
			\bottomrule
		\end{tabular}%
	}
\end{table}

\emph{Data}. The retained training pools contain 100,000 Base-MQAR,
160,000 Update-MQAR, and 120,000 Block-MQAR samples. MQAR datapoints are generated from
vocabulary size \(V = 8192\) and power-law exponent \(a = 0.01\).
Original MQAR samples keys for the two roles from disjoint halves of the vocabulary, \(\{k\} \in \{0, \cdots, \frac{V}{2}\} \) and \(\{q\} \in \{\frac{V}{2}, \cdots, V\}\). We use a modified MQAR generator that samples both roles from the same vocabulary and applies a random permutation to the key and query vocabularies independently for every sequence, eliminating the \textit{key--value} role shortcut.

\newcommand{\mqarK}[1]{\ensuremath{\textcolor{blue!55!black}{\boldsymbol{#1}}}}
\newcommand{\mqarV}[1]{\ensuremath{\textcolor{orange!75!black}{\boldsymbol{#1}}}}
\newcommand{\mqarS}[1]{\ensuremath{\textcolor{gray!60!black}{{#1}}}}
\newcommand{\mqarSegment}[3]{%
	\begingroup
	\setlength{\fboxsep}{2.5pt}%
	\underset{\text{\scriptsize\sffamily #1}}{%
		\text{\colorbox{#2!10}{\(\displaystyle #3\)}}%
	}%
	\endgroup
}

To probe the model's associative-memory behavior more directly, we consider three variants of MQAR.
\begin{itemize}[nosep,leftmargin=1.25em,labelsep=0.4em]
\item \emph{Update-MQAR}: a subset of keys is written more than once with different values:
\[
	\mqarSegment{context}{red}{%
		\mqarK{A},\mqarV{4},\;
		\mqarK{B},\mqarV{5},\;
		\mqarK{A},\mqarV{4'}, %
		\mqarK{A},\mqarV{4^*}, %
	}
	\mqarSegment{query}{gray}{%
		\mqarS{0},\; \mqarS{0},\;
		\mqarK{A},\; \mqarS{0},\; \mqarK{B}%
	},
\]
The correct answers are \mqarV{4^*} and \mqarV{5}, so the model must return the newest value for updated keys and keep unchanged keys intact. This tests whether the model can overwrite associations without corrupting non-updated keys or values.
Models were trained on a dataset with mixed sequence lengths \(N \in \{64, 128\}\), numbers of key-value pairs \(D \in \{4, 16\}\), and update fractions \(U \in \{D, D/2, D/4, D/8\}\).

\item \emph{Block-MQAR}: key and value tokens are presented in a block manner rather than as adjacent key--value pairs:
\[
	\mqarSegment{context}{red}{%
		\mqarK{A},\mqarK{B},\;
		\mqarV{4},\mqarV{5},\;
		\mqarK{C},\mqarK{D}, %
		\mqarV{7},\mqarV{2}, %
	}
	\mqarSegment{query}{gray}{%
		\mqarS{0},\; \mqarS{0},\;
		\mqarK{A},\; \mqarS{0},\; \mqarK{B}%
	},
\]
The correct answers are \mqarV{4} and \mqarV{5}. This tests whether the model can form \textit{key--value} associations when the value is delayed relative to the key, rather than relying only on local adjacency, and whether it keeps stored associations intact across query tokens.
Models were trained on block size \(B = 2\) with sequence length \(N = 128\) and mixed numbers of key-value pairs \(D \in \{B, 2B, 3B, 4B\}\).

\end{itemize}


\section{Distillation and Long-Context Retrieval}
\label{app:slimpajama_details}

Our two-phase distillation protocol --- per-layer hidden-state matching
(Phase A) followed by joint logit-KL with an auxiliary hidden-state
regularizer (Phase B) --- adapts the per-block matching strategies used
in recent Transformer-to-recurrent
linearizations~\citep{junxiongdaniele2024mambainllama,lan2025liger} to
the within-family setting of replacing a subset of layers in a
pretrained recurrent backbone.

All variants start from the released 340M-class Gated DeltaNet checkpoint
\texttt{linear-moe-hub/Gated-Deltanet-340M}, which has 24 layers,
4 heads with head dimension 256, and hidden width 1,024. We replace
4 of the 24 layers and train on 400M tokens, approximately 2.7\% of
the checkpoint's reported 15B-token pretraining corpus. All distillation runs use
SlimPajama at sequence length 2048, AdamW with weight decay 0, gradient
clipping 1.0, and bfloat16 mixed precision.  Phase A trains each
replaced layer $\ell \in \mathcal{I}$, with
$\mathcal{I}=\{5,11,17,23\}$, in parallel via
\begin{equation}
	\mathcal{L}_A^{(\ell)}
	=
	\frac{\left\|y_{\ell}^{\mathrm{BL}} - y_{\ell}^{\mathrm{GDN}}\right\|_2^2}
	{\operatorname{Var}\!\left(y_{\ell}^{\mathrm{GDN}}\right) + 10^{-6}}
	+
	0.05\left(1-\cos\!\left(y_{\ell}^{\mathrm{BL}},
	y_{\ell}^{\mathrm{GDN}}\right)\right),
	\label{eq:spj-phase-a}
\end{equation}
where $y_{\ell}^{\bullet}$ denotes the layer-$\ell$ attention-block
output for the student and frozen teacher under matched inputs.  We use
100{,}007{,}936 tokens per layer (1{,}526 optimizer steps at 65{,}536
tokens/step, peak LR $5\times 10^{-4}$, warmup 25,
cosine-with-min-LR to $0.1\times$ peak).  Phase B then jointly distills
the assembled hybrid via
\begin{equation}
	\mathcal{L}_B
	=
	\lambda_{\mathrm{KL}}\,
	\mathrm{KL}\!\left(p_{\mathrm{GDN}} \,\|\, p_{\mathrm{BL}}\right)
	+
	\frac{\lambda_b}{|\mathcal{I}|}
	\sum_{\ell \in \mathcal{I}}
	\frac{\left\|y_{\ell}^{\mathrm{BL}} - y_{\ell}^{\mathrm{GDN}}\right\|_2^2}
	{\operatorname{Var}\!\left(y_{\ell}^{\mathrm{GDN}}\right) + 10^{-6}},
	\label{eq:spj-phase-b}
\end{equation}
with $(\lambda_{\mathrm{KL}},\lambda_b)=(0.8,0.2)$ for the first
150{,}077{,}440 tokens and $(0.9,0.1)$ for the next 150{,}208{,}512
tokens, using a 10-step linear ramp at the stage transition.  Trainable
parameters at each BL layer are the fresh BL recurrence
($A_{\mathrm{radius}}$, $A_{\mathrm{angle}}$, $W_A$, and the noise
weights and scalars), the inherited Q/K/V/O projections, and the output
gate (\texttt{g\_proj}, \texttt{o\_norm}); the same four layers' MLP
\texttt{gate\_proj} and \texttt{down\_proj} weights are unfrozen at a
$0.1\times$ LR multiplier.  Phase B uses peak LR $3\times 10^{-4}$,
warmup 50, cosine-min $0.1\times$ peak, and 8-GPU DDP with local
microbatch 8 in Stage 1 and 16 in Stage 2.  The matched-compute control
GDN-FT replaces \Cref{eq:spj-phase-b} with next-token cross-entropy on
the same SlimPajama corpus,
\begin{equation}
	\mathcal{L}_{\mathrm{GDN\text{-}FT}}(\theta)
	=
	-\mathbb{E}_{x\sim\mathcal{D}}\!\left[
	\frac{1}{T-1}\sum_{t=1}^{T-1}
	\log p_\theta\!\left(x_{t+1}\mid x_{1:t}\right)
	\right],
	\label{eq:spj-gdn-ft}
\end{equation}
for the same 400M tokens with the same trainable mask and LR schedule.
We verify the released GDN baseline against its SlimPajama-heldout
1M-token sentinel CE of 2.535 prior to distillation and evaluation.
\begin{figure}[t]
	\centering
	\includegraphics[width=\linewidth]{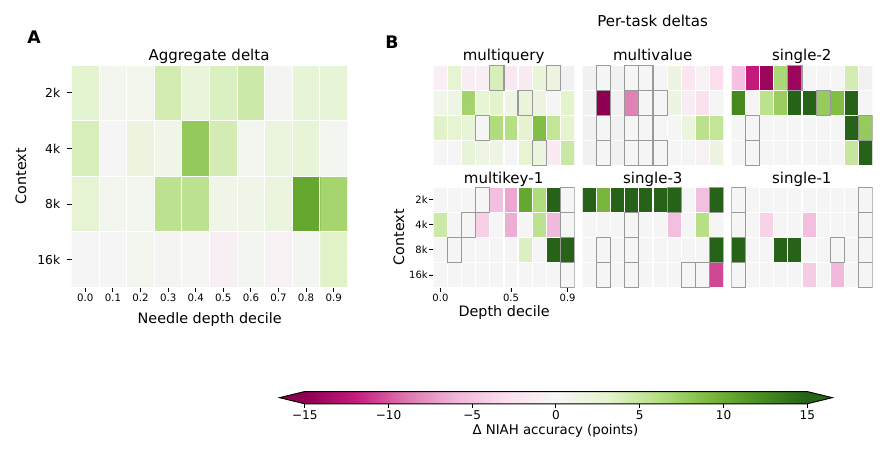}
	\caption{\textbf{Appendix Fig.~A1.} Position-stratified
		BL $-$ GDN-FT RULER accuracy deltas for the matched-compute
		control. The panels mirror \Cref{fig:slimpajama}B,C.}
	\label{fig:spj-gdn-ft-position}
\end{figure}

\paragraph{Existing assets and licenses.}
The experiments use public research assets only. Zoology/MQAR
\citep{arora2024zoology} and RULER \citep{hsieh2024ruler} are released
under Apache-2.0 licenses. The
\texttt{linear-moe-hub/Gated-Deltanet-340M} checkpoint
\citep{du2026mom,yang2025gated} is released on Hugging Face under
Apache-2.0. SlimPajama
\citep{cerebras2023slimpajama} is an open dataset release whose
documentation lists source-specific terms for its constituent subsets.
WikiText-103 \citep{merity2017pointer} is distributed under
CC-BY-SA-3.0/GFDL terms, and PG-19 \citep{rae2020compressive} is
released from an Apache-2.0 repository with source texts drawn from
Project Gutenberg. We cite the original creators and do not redistribute
the underlying assets in this paper.

\begin{table}[t]
	\centering
	\scriptsize
	\setlength{\tabcolsep}{3pt}
	\resizebox{\linewidth}{!}{%
		\begin{tabular}{lrrrrrrrr}
			\toprule
			context & model & niah\_single\_1 & niah\_single\_2 & niah\_single\_3 & niah\_multikey\_1 & niah\_multivalue & niah\_multiquery & mean \\
			\midrule
			2048  & GDN-zero & 100.00 & 96.50 & 30.50 & 28.00 & 19.25 & 22.88 & 49.52 \\
			2048  & GDN-FT   & 100.00 & 92.00 & 27.00 & 25.00 & 23.50 & 23.25 & 48.46 \\
			2048  & BL       & \textbf{100.00} & 89.00 & \textbf{42.00} & 27.50 & 22.88 & \textbf{23.50} & \textbf{50.81} \\
			4096  & GDN-zero & 98.00 & 35.50 & 2.00 & 27.50 & 7.00 & 16.50 & 31.08 \\
			4096  & GDN-FT   & 98.00 & 35.50 & 1.50 & 24.50 & 9.88 & 20.12 & 31.58 \\
			4096  & BL       & 97.00 & \textbf{51.00} & 1.50 & 23.50 & 8.62 & \textbf{22.50} & \textbf{34.02} \\
			8192  & GDN-zero & 71.00 & 13.00 & 1.50 & 7.50 & 1.88 & 6.00 & 16.81 \\
			8192  & GDN-FT   & 60.50 & 14.50 & 2.00 & 8.50 & 2.75 & 8.12 & 16.06 \\
			8192  & BL       & 67.00 & \textbf{18.50} & \textbf{3.00} & \textbf{13.00} & \textbf{6.25} & \textbf{12.25} & \textbf{20.00} \\
			16384 & GDN-zero & 31.50 & 4.00 & 1.50 & 5.00 & 1.50 & 3.62 & 7.85 \\
			16384 & GDN-FT   & 31.00 & 4.00 & 2.50 & 5.50 & 2.75 & 4.38 & 8.36 \\
			16384 & BL       & 30.00 & \textbf{7.00} & 1.50 & \textbf{5.50} & \textbf{3.12} & \textbf{5.75} & \textbf{8.81} \\
			\bottomrule
		\end{tabular}%
	}
	\caption{\textbf{Appendix Table A1.} Per-task RULER retrieval scores
		for the Phase-A/Phase-B distillation experiment and matched-compute
		control. CWE is excluded.}
	\label{tab:spj-ruler-full}
\end{table}

\begin{table}[!ht]
	\centering
	\scriptsize
	\setlength{\tabcolsep}{2.5pt}
	\resizebox{\linewidth}{!}{%
		\begin{tabular}{lrrrrrrrrrr}
			\toprule
			dataset & context & GDN-zero ppl & GDN-zero CE & GDN-zero tokens & GDN-FT ppl & GDN-FT CE & GDN-FT tokens & BL ppl & BL CE & BL tokens \\
			\midrule
			wikitext-103 & 2048 & 15.94 & 2.769 & 315238 & 15.97 & 2.771 & 315238 & 16.34 & 2.794 & 315238 \\
			wikitext-103 & 4096 & 15.06 & 2.712 & 315315 & 15.08 & 2.713 & 315315 & 15.46 & 2.739 & 315315 \\
			wikitext-103 & 8192 & 14.57 & 2.679 & 311258 & 14.59 & 2.680 & 311258 & 14.97 & 2.706 & 311258 \\
			wikitext-103 & 16384 & 14.44 & 2.670 & 311277 & 14.47 & 2.672 & 311277 & 14.83 & 2.697 & 311277 \\
			slimpajama-heldout & 2048 & 12.61 & 2.535 & 1000983 & 12.62 & 2.536 & 1000983 & 12.91 & 2.558 & 1000983 \\
			slimpajama-heldout & 4096 & 12.32 & 2.511 & 1003275 & 12.33 & 2.512 & 1003275 & 12.61 & 2.534 & 1003275 \\
			slimpajama-heldout & 8192 & 12.15 & 2.497 & 1007493 & 12.16 & 2.498 & 1007493 & 12.43 & 2.520 & 1007493 \\
			slimpajama-heldout & 16384 & 12.06 & 2.490 & 1015746 & 12.07 & 2.491 & 1015746 & 12.34 & 2.513 & 1015746 \\
			\bottomrule
		\end{tabular}%
	}

	\vspace{3pt}
	\resizebox{\linewidth}{!}{%
		\begin{tabular}{lrrrrrrrrrrrrr}
			\toprule
			dataset & context & GDN-zero ppl & GDN-zero CE & GDN-zero tokens & GDN-zero windows & GDN-FT ppl & GDN-FT CE & GDN-FT tokens & GDN-FT windows & BL ppl & BL CE & BL tokens & BL windows \\
			\midrule
			PG-19 validation & 2048 & 20.30 & 3.010 & 2996808 & 1464 & 20.36 & 3.013 & 2996808 & 1464 & 20.83 & 3.037 & 2996808 & 1464 \\
			PG-19 validation & 4096 & 19.64 & 2.977 & 2997540 & 732 & 19.70 & 2.980 & 2997540 & 732 & 20.16 & 3.004 & 2997540 & 732 \\
			PG-19 validation & 8192 & 19.29 & 2.960 & 2997906 & 366 & 19.35 & 2.963 & 2997906 & 366 & 19.81 & 2.986 & 2997906 & 366 \\
			PG-19 validation & 16384 & 19.14 & 2.952 & 2998089 & 183 & 19.19 & 2.955 & 2998089 & 183 & 19.65 & 2.978 & 2998089 & 183 \\
			\bottomrule
		\end{tabular}%
	}
	\caption{\textbf{Appendix Table A2.} Per-corpus context-length
		perplexity for Wikitext-103, SlimPajama-heldout, and PG-19
		validation.}
	\label{tab:spj-ppl-full}
\end{table}

Appendix Table~A1 and Appendix Table~A2 report the per-cell numbers
behind the main-text means.  Per-task NIAH gains of BL over GDN-zero
concentrate on multi-target retrieval: niah\_multivalue improves by
+3.6, +1.6, +4.4, and +1.6 points at 2k/4k/8k/16k, and
niah\_multiquery improves by +0.6, +6.0, +6.3, and +2.1.  The
niah\_single\_2 task also improves at long context (+15.5, +5.5, and
+3.0 at 4k/8k/16k).  Across Wikitext103, PG-19 validation (first 50
books, about 3M tokens, non-overlapping windows), and SlimPajama-heldout
(1M-token shard, non-overlapping windows), the BL $-$ GDN-zero
perplexity gap is flat across context length at 2.5--2.7\%, and GDN-FT
closely tracks GDN-zero on all three corpora at all four lengths
($\Delta \leq +0.3\%$ on PG-19 and $\Delta \leq +0.07\%$ on the other
two).  All perplexity evaluations use the released GDN tokenizer,
identical chunk boundaries across the three models, and cross-entropy
averaged over $L-1$ predicted tokens per non-overlapping window of
length $L$.

Appendix Fig.~A1 reports the BL $-$ GDN-FT versions of
\Cref{fig:slimpajama}B,C.  The three position patterns---the mid-depth
peak at 4k, the late-decile rise at 8k, and the localized 2k cost---all
survive the matched-compute control with comparable magnitudes,
confirming that the depth-conditioned shape is a property of the BL
substitution rather than of additional training compute.  Per-decile
sample counts across the 240 task/length/decile cells range from $n=8$
to $n=180$; 208 cells have $n_{\mathrm{BL}}<30$, with the smallest
counts in the 0.0--0.5 deciles at 16k where both models are at
retrieval floor.  Cells with $n<30$ are bordered in light gray in
\Cref{fig:slimpajama}B,C and Appendix Fig.~A1.

\subsection{From-scratch WikiText-103 comparison}
\label{app:from_scratch_wikitext}

We also compare BL, GDN, and Mamba-2 when all models are trained from
scratch on the official WikiText-103 training split
\citep{merity2017pointer}. BL and GDN have hidden width 512, four
layers, and four heads with head dimension 128; Mamba-2 uses eight
mixer-only blocks. The models have approximately 16--18M
non-embedding parameters and matched recurrent-state budgets. All
arms use the 32k-BPE \texttt{fla-hub/gla-1.3B-100B} tokenizer,
BOS-prepended documents, packed 2,048-token windows, and paired token
streams. Each arm trains for four epochs with per-epoch reshuffling,
or 520M tokens (7,935 steps), at 65,536 tokens per step (microbatch 16,
gradient accumulation 2). We use AdamW with
$\beta=(0.9,0.95)$, weight decay 0.01, gradient clipping 1.0, a
$6\times10^{-4}$ peak learning rate, approximately 4\% linear warmup,
and cosine decay to $0.1\times$ the peak. We select the learning rate
with a GDN probe and then apply it unchanged to all arms. Dropout 0.4
is applied to MLP hidden activations; it is a no-op for the mixer-only
Mamba-2 arm.

We evaluate 288 packed windows from the official validation and test
splits, whose articles are absent from the training split. Following
the associative-recall protocol of Zoology and Based
\citep{arora2024zoology,arora2024simple}, an AR-hit is a token that
completes a token pair whose most recent earlier occurrence is at most
1,024 tokens back in the same window and whose count in the 130M-token
training corpus is at most 16. This threshold scales Zoology's
1,250-per-10B rarity rate to the training corpus. Every other predicted
token is in the other-token slice, and all arms use identical token
positions. \Cref{tab:from_scratch_wikitext} reports BPE-token
perplexities, which are comparable within the table but not with
published word-level WikiText-103 perplexities.

BL falls between GDN and Mamba-2 in overall perplexity and trails GDN
on other tokens. On AR-hits, however, BL has lower mean perplexity than
GDN in every distance bucket. Mamba-2 is comparable to BL on other
tokens but performs poorly on AR-hits. The BL--GDN gaps exceed the
across-seed standard deviations at 1--64 and 65--256 tokens. At
257--1,024 tokens, BL retains the lower mean, but the gap is not
resolved relative to seed variation.



\end{document}